\newcommand{\specialcell}[2][c]{%
  \begin{tabular}[#1]{@{}c@{}}#2\end{tabular}}
\DeclareMathOperator*{\E}{\mathbb{E}}
\let\Pr\relax
\DeclareMathOperator*{\Pr}{\mathbb{P}}
\DeclareMathOperator{\relu}{ReLU}
\newcommand{\R}{\mathbb{R}}
\DeclareMathOperator*{\argmin}{arg\,min}
\DeclareMathOperator{\nnz}{nnz}
\newcommand{\eqdef}{\mathbin{\stackrel{\rm def}{=}}}
\newcommand{\norm}[1]{\|#1\|}
\newcommand{\tsc}[1]{\begin{sc}\small {#1}\end{sc}}
\newtheorem{theorem}{Theorem}
\newtheorem{corollary}[theorem]{Corollary}
\newtheorem*{lemma*}{Lemma}
\newtheorem{definition}[theorem]{Definition}
\newtheorem*{rep@theorem}{\rep@title}
\newcommand{\newreptheorem}[2]{%
	\newenvironment{rep#1}[1]{%
		\def\rep@title{#2 \ref{##1}}%
		\begin{rep@theorem}}%
		{\end{rep@theorem}}}
\title{\vspace{-2em}Coresets for Classification -- Simplified and Strengthened}
\author{Tung Mai \\ Adobe Research \\ \texttt{tumai@adobe.com} \and Cameron Musco \\ University of Massachusetts Amherst \\ \texttt{cmusco@cs.umass.edu} \and Anup B. Rao \\ Adobe Research \\ \texttt{anuprao@adobe.com}}
\begin{document}

\maketitle

\begin{abstract}
We give relative error coresets for training linear classifiers with a broad class of loss functions, including the logistic loss and hinge loss. Our construction achieves $(1\pm \epsilon)$ relative error with $\tilde O(d \cdot \mu_y(X)^2/\epsilon^2)$ points, where $\mu_y(X)$ is a natural complexity measure of the data matrix $X \in \R^{n \times d}$ and label vector $y \in \{-1,1\}^n$, introduced in \cite{munteanu2018coresets}. Our result is based on subsampling data points with probabilities proportional to their  \emph{$\ell_1$ Lewis weights}. It significantly improves on existing theoretical bounds and performs  well in practice, outperforming uniform subsampling along with other importance sampling methods. Our sampling distribution does not depend on the labels, so can be used for active learning. It also does not depend on the specific loss function, so a single coreset can be used  in multiple training scenarios.
\end{abstract}

\section{Introduction}

Coresets are an important tool in scalable machine learning. Given $n$ data points and some objective function, we seek to select a subset of $m \ll n$ data points such that minimizing the objective function on those points (possibly where selected points are weighted non-uniformly) will yield a near minimizer over the full dataset. Coresets have been applied to problems ranging from clustering \cite{Har-PeledMazumdar:2004,FeldmanLangberg:2011}, to principal component analysis \cite{CohenElderMusco:2015,FeldmanSchmidtSohler:2020}, to linear regression \cite{DrineasMahoneyMuthukrishnan:2006,DasguptaDrineasHarb:2009,ClarksonWangWoodruff:2019}, to kernel density estimation \cite{PhillipsTai:2020}, and beyond \cite{AgarwalHar-PeledVaradarajan:2005,BachemLucicKrause:2017,SenerSavarese:2018}.

We study coresets for \emph{linear classification}. Given a data matrix $X \in \R^{n \times d}$, with $i^{th}$ row $x_i$ and a label vector $y \in \{-1,1\}^n$, the goal is to compute
$\beta^* = \argmin_{\beta \in \R^d} L(\beta)$, where $L(\beta) =  \sum_{i=1}^n f(\langle x_i,\beta\rangle \cdot y_i)$ for a classification loss function $f$, such as the logistic loss $f(z) = \ln(1+e^{-z})$ used in logistic regression or hinge loss $f(z) = \max(0,1-z)$ used in soft-margin SVMs. 

We seek to select a subset of $m \ll n$ points $x_{i_1},\ldots,x_{i_m}$ along with a corresponding set of weights $w_1,\ldots, w_m$ such that, for some small $\epsilon > 0$ and all $\beta \in \R^d$,
\begin{align}\label{eq:rel}
\left | \sum_{j=1}^m w_j \cdot f(\langle x_{i_j},\beta\rangle \cdot y_{i_j}) -L(\beta) \right | \le \epsilon \cdot L(\beta).
\end{align}
This \emph{relative error coreset guarantee} ensures that if $\tilde \beta \in \R^d$ is computed to be the minimizer of the weighted loss over our $m$ selected points, then $L(\tilde \beta) \le \frac{1+\epsilon}{1-\epsilon} \cdot L(\beta^*)$. 

It is well known that common classification loss functions such as the log and hinge losses \emph{do not admit relative error coresets with $o(n)$ points}. To address this issue, Munteanu et al. \cite{munteanu2018coresets} introduce a natural notion of the complexity of the matrix $X$ and label vector $y$, which we also use to parameterize our results.
\begin{definition}[Classification Complexity Measure \cite{munteanu2018coresets}]\label{def:complexity}
For any $X \in \R^{n \times d}$, $y \in \{-1,1\}^n$, let  $\mu_y(X) = \sup_{\beta \neq 0} \frac{\norm{(D_yX\beta)^+}_1}{\norm{(D_yX\beta)^-}_1}$, where $D_y \in \R^{n \times n}$ is a diagonal matrix with $y$ as its diagonal, and $(D_y X\beta)^+$ and $(D_yX\beta)^-$ denote the set of positive and negative entries in $D_yX\beta$. 
\end{definition}
Roughly, $\mu_y(X)$ is large when there is some parameter vector $\beta \in \R^d$ that produces significant imbalance between correctly classified and misclassified points. This can occur e.g., when the data is exactly separable. However, as argued in  \cite{munteanu2018coresets}, we typically expect $\mu_y(X)$ to be small.

\subsection{Our Results} 
Our main result, formally stated in Corollary  \ref{cor:nicehinge}, is that sampling $\tilde O \left (\frac{d \cdot \mu_y(X)^2}{\epsilon^2} \right )$ points according to the \emph{$\ell_1$ Lewis weights} of $X$ and reweighting appropriately, yields a relative error coreset satisfying \eqref{eq:rel} for the logistic loss, the hinge loss, and generally a broad class of `hinge-like' losses. This significantly improves the previous state-of-the-art using the same $\mu_y(X)$ parameterization, which was $\tilde O \left (\frac{d^3 \cdot \mu_y(X)^3}{\epsilon^4} \right )$ \cite{munteanu2018coresets}. See Table \ref{tab:comp} for a detailed comparison with prior work.

\smallskip

\begin{table}[h]
\small
\begin{center}
\begin{tabular}{c | c | c | c| c|c}
 \textbf{Samples} & \textbf{Error} &  \textbf{Loss} &  \textbf{Assumptions} & \textbf{Distribution} & \textbf{Ref.} \\ 
 \hline
 \rowcolor{lightgray}
$ \tilde O \left (\frac{d \cdot \mu_y(X)^2}{\epsilon^2} \right )$ & relative & \specialcell{log, hinge\\ ReLU} & Def. \ref{def:complexity} & $\ell_1$ Lewis & Cors. \ref{cor:relu}, \ref{cor:nicehinge} \\  
 \hline
$ \tilde O \left (\frac{d^3 \cdot \mu_y(X)^3}{\epsilon^4} \right )$ &  relative & log & Def. \ref{def:complexity} & sqrt lev. scores & \cite{munteanu2018coresets} \\ 
 \hline 
$ \tilde O \left (\frac{\sqrt{n} \cdot d^{3/2} \cdot \mu_y(X)}{\epsilon^2} \right )$ &  relative & log & Def. \ref{def:complexity} & sqrt lev. scores & \cite{munteanu2018coresets} \\ 
 \hline
$\tilde O \left (\frac{n^{1-\kappa}d}{\epsilon^2}  \right )$ & relative & log, hinge & \specialcell{$\norm{x_i}_2 \le 1\, \forall i$ \\ regularization $n^\kappa \norm{\beta}_1,$ \\ $n^{\kappa} \norm{\beta}_2,$ or $n^\kappa \norm{\beta}_2^2$}& uniform & \cite{CurtinImMoseley:2019} \\ 
\hline 
$O \left ( \frac{\sqrt{d}}{\epsilon} \right )$ & additive  $\epsilon n$ & log & $\norm{\beta}_2, \norm{x_i}_2 \le 1\, \forall i$ & deterministic & \cite{KarninLiberty:2019} \\
\end{tabular}
\vspace{-.5em}
\end{center}
\caption{Comparison to prior  work. $\tilde O(\cdot )$ hides logarithmic factors in the problem parameters. We note that the bounded norm assumption of \cite{CurtinImMoseley:2019} can be removed by simply scaling $X$, giving  a dependence on  the maximum row norm of $X$ in the sample complexity. The importance sampling distributions of our work and \cite{munteanu2018coresets} are both  in fact a mixture with uniform sampling. Our work and \cite{KarninLiberty:2019,CurtinImMoseley:2019} generalize to broader classes of loss functions -- for simplicity here we focus just on the important logistic loss, hinge loss, and ReLU.}\label{tab:comp}
\end{table}


\noindent\textbf{Theoretical Approach.} The Lewis weights are a measure of the importance of rows in $X$, originally designed to sample rows  in order to preserve $\norm{X\beta}_1$ for any $\beta \in \R^d$ \cite{cohen2015lp}. They can be viewed as an $\ell_1$ generalization of the \emph{leverage scores} which are used in applications where one seeks to preserve $\norm{X\beta}_2$ \cite{CohenLeeMusco:2015}. Like the leverage scores,  the Lewis weights can be approximated very efficiently, in $\tilde O(\nnz(X)+d^\omega)$ time where $\omega \approx 2.37$ is the constant of fast matrix multiplication. They can also be approximated in streaming and online settings \cite{BravermanDrineasMusco:2020}. Our coreset constructions directly inherit these computational properties.
 
 The $\ell_1$ Lewis weights are a natural sampling distribution for hinge-like loss functions, including the logistic loss, hinge loss, and the ReLU. These functions grow approximately linearly for positive $z$, but asymptote at $0$ for negative $z$. Thus, ignoring some technical details, it can be shown that $\sum_{i=1}^n f(\langle x_i,\beta\rangle \cdot y_i)$ concentrates only better under sampling than $\sum_{i=1}^n |\langle x_i,\beta\rangle \cdot y_i| = \norm{D_y X \beta}_1.$

 As shown by Cohen and Peng \cite{cohen2015lp},  taking $\tilde O(d/\epsilon^2)$ samples according to the Lewis weights of $X$ (which are the same as those of $D_yX$) suffices to approximate $\norm{D_y X\beta}_1$ for all $\beta \in \R^d$ up to $(1 \pm \epsilon)$ relative error. We show  in Thm. \ref{thm:hinge} using contraction bounds for Rademacher averages that it in turn suffices to approximate $\sum_{i=1}^n f(\langle x_i,\beta\rangle \cdot y_i)$ up to additive error roughly $\epsilon (\norm{X}_1 + n)$. We then simply show in Corollaries \ref{cor:relu} and \ref{cor:nicehinge} that by setting $\epsilon' = \Theta(\epsilon/\mu_y(X))$ and applying Def. \ref{def:complexity}, this result yields a relative error coreset for a broad class of hinge-like loss functions including the ReLU, the log loss, and the hinge loss. 
 
 \smallskip
 
\noindent\textbf{Experimental Evaluation.} In Section \ref{sec:exp}, we compare our Lewis weight-based method to the square root of leverage score method of \cite{munteanu2018coresets}, uniform sampling as studied in \cite{CurtinImMoseley:2019}, and an oblivious sketching algorithm of~\cite{MunteanuOmlorWoodruff:2021}. We study performance in minimizing both the log and hinge losses, with and without regularization. We observe that our method typically far outperforms uniform sampling, even in some cases when regularization is used. It performs comparably to the  method of \cite{munteanu2018coresets}, seeming to outperform when the $\mu_y(X)$  complexity parameter is large. 
 
\subsection{Related Work}\label{sec:related} 

Our work is closely related to \cite{munteanu2018coresets}, which introduces the $\mu_y(X)$ complexity measure. They give relative error coresets with worse polynomial dependences on the parameters through a mixture of uniform sampling and sampling by the \emph{squareroots of the leverage scores}. This approach has the same intuition as ours -- the squareroot leverage score sampling preserves the `linear part' of the hinge-like loss function and the uniform sampling preserves the asymptoting piece. However, like many other works on coresets for logistic regression and other problems \cite{HugginsCampbellBroderick:2016,TolochinskyFeldman:2018, CurtinImMoseley:2019} the analysis of Munteanu et al. centers on the \emph{sensitivity framework}. At best, this framework can achieve $\Omega(d^2)$ sample complexity -- one $d$ factor comes from the total sensitivity of the problem, and the other from a VC dimension bound on the set of linear classifiers. To the best of our knowledge, our work is the first that avoids this sensitivity framework -- Lewis weight sampling results are based on $\ell_1$ matrix concentration result and give optimal linear dependence on the dimension $d$.

\smallskip

\noindent \textbf{Regularized Classification Losses.}
Rather than using the $\mu_y(X)$ parameterization of Def. \ref{def:complexity}, several other works  \cite{TolochinskyFeldman:2018,CurtinImMoseley:2019} achieve relative error coresets for the log and hinge losses by assuming that the loss function is regularized by $n^{\kappa} \cdot R(\beta)$, where $\kappa > 0$ is some parameter and $R(\beta)$ is some norm -- e.g., $\norm{\beta}_1$, $\norm{\beta}_2$, or in the important case of soft-margin SVM, $\norm{\beta}_2^2$. 

Curtin et al. show that simple uniform sampling gives a relative error coreset with $\tilde O \left (n^{1-\kappa} d/\epsilon^2 \right )$ points in this setting \cite{CurtinImMoseley:2019}. They also show that no coreset with $o(n^{(1-\kappa)/5})$ points exists. In Appendix \ref{sec:lower}, we tighten this lower bound, showing via a reduction to the INDEX problem in communication complexity that the $\tilde O(n^{1-\kappa})$ bound achieved by uniform sampling is in fact optimal.

Our theoretical  results are incomparable to those of \cite{CurtinImMoseley:2019}. Empirically though, Lewis weight sampling often far outperforms uniform sampling -- see Sec. \ref{sec:exp}. Note that our results do directly apply in the regularized setting -- our relative error can only improve. However, our theoretical bounds do not actually improve with regularization, still depending on $\mu_y(X)$, which  \cite{CurtinImMoseley:2019} avoids.

\smallskip

\noindent \textbf{Other Related Work.} Wang, Zhu, and Ma \cite{WangZhuMa:2018} take a statistical perspective on subsampling for logistic regression, studying optimal subsampling strategies in the limit as $n \rightarrow \infty$. Their strategies do not yield finite sample coresets and cannot be implemented without fully solving the original logistic regression problem, however they suggest a heuristic approximation approach. Ting and Brochu also study this asymptotic regime, suggesting sampling by the data point influence functions, which are related to the leverage scores \cite{TingBrochu:2018}.
Less directly, our work is connected to  sampling and sketching algorithms for linear regression under different loss functions, often using variants of the leverage scores or Lewis weights \cite{DasguptaDrineasHarb:2009,ClarksonWoodruff:2014,AndoniLinSheng:2018,ClarksonWangWoodruff:2019,ChenDerezinski:2021}. It is also related to work on sketching methods that preserve the norms of vectors under nonlinear transformations, like the ReLU, often with applications to coresets or compressed sensing for neural networks \cite{BoraJalalPrice:2017,BenOsadchyBraverman:2020,GajjarMusco:2021}.

\section{Preliminaries}

\noindent \textbf{Notation.} Throughout, for $f: \R \rightarrow \R$ and a vector $y \in \R^n$, we let $f(y) \in \R^n$ denote the entrywise application of $f$ to $y$. For a vector $y \in \R^n$ we let $y_i$ denote it's $i^{th}$ entry. So $f(y)_i = f(y_i)$.

For data matrix $X \in \R^{n \times d}$ with rows $x_1,\ldots,x_n \in \R^d$ and label vector $y \in \{-1,1\}^n$ we consider classification loss functions of the form $L(\beta) = \sum_{i=1}^n f(\langle x_i, \beta\rangle \cdot y_i) = \sum_{i=1}^n f(D_y X \beta)_i$, where $D_y \in \R^{n \times n}$ is the diagonal matrix with $y$ on its diagonal. For simplicity, we write $X$ instead of $D_yX$ throughout, since we can think of the labels as just being incorporated into $X$ by flipping the signs of its rows. Similarly, we write the complexity parameter of Def. \ref{def:complexity} as $\mu(X) = \sup_{\beta \neq 0} \frac{\norm{(X\beta)^+}_1}{\norm{(X\beta)^-}_1}$.

Throughout we will call 
$f(z) = \ln(1+e^z)$ the \emph{logistic loss} and $f(z) = \max(0,1+z)$ the \emph{hinge loss}. Note that these functions have the sign of $z$ flipped from the typical convention. This is just notational -- we can always negate $X$ or $\beta$ and have an identical loss function. We use these versions as they are both $\ell_\infty$ close to the ReLU function, a fact that we will leverage in our analysis.

\smallskip

\noindent \textbf{Basic sampling results.}
Our coreset construction is based on sampling with the $\ell_1$ Lewis weights. We define these weights and state fundamental results on  Lewis weight sampling and below. 

\begin{definition}[$\ell_1$ Lewis Weights \cite{cohen2015lp}] For any $X \in \R^{n \times d}$ the $\ell_1$ Lewis weights are the unique values $\tau_1(X),\ldots, \tau_n(X)$ such that, letting $W \in \R^{n \times n}$ be the diagonal matrix with $1/\tau_1(X),\ldots, 1/\tau_n(X)$ as its diagonal, for all $i$,
\begin{align*}
\tau_i(X)^2 = x_i^T (X^T W X)^{+} x_i,
\end{align*}
where for any matrix $M$, $M^+$ is the pseudoinverse. $M^+ = M^{-1}$ when $M$ square and full-rank.
\end{definition}

\begin{theorem}[$\ell_1$ Lewis Weight Sampling]\label{thm:lewis}
Consider any $X \in \R^{n \times d}$, and set of sampling values $p_i$ with $\sum_{i=1}^n p_i = m$ and $p_i \ge \frac{c \cdot \tau_i(X) \log(m/\delta)}{\epsilon^{2}}$ for all $i$, where $c$ is a universal constant. If we generate a matrix $S \in \R^{m \times n}$ with each row chosen independently as the $i^{th}$ standard basis vector times $1/p_i$ with probability $p_i/m$ then there exists an $\ell > 1$ such that if $\sigma \in \{-1,1\}^m$ is chosen with independent Rademacher entries
\begin{align*}
\E_{S,\sigma} \left [ \sup_{\beta: \norm{X\beta}_1 = 1} \left | \sum_{i=1}^n \sigma_i [SX\beta]_i  \right |^\ell \right ] \le \epsilon^\ell \cdot \delta.
\end{align*}
In particular, if each $p_i$ is a scaling of a constant factor approximation to the Lewis weight $\tau_i(X)$, $S$ has $m = O\left (\frac{d \log(d/\epsilon)}{\epsilon^2}\right)$ rows. 
\end{theorem}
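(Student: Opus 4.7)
The plan is to follow the Lewis-weight $\ell_1$ subspace embedding argument of Cohen and Peng \cite{cohen2015lp}. I would use the defining property of the Lewis weights to reduce the sampling question to an $\ell_2$-concentration problem on a reweighted matrix, and then control the supremum over $\{X\beta : \norm{X\beta}_1 = 1\}$ via a chaining argument that delivers the stated high-moment bound.

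The crucial structural input is the Lewis weight definition itself. Setting $W = \text{diag}(1/\tau_1(X), \ldots, 1/\tau_n(X))$ and $U = W^{1/2} X$, the identity $\tau_i(X)^2 = x_i^T (X^T W X)^+ x_i$ implies that the leverage score of the $i$-th row of $U$ equals exactly $\tau_i(X)$. Hence $\sum_i \tau_i(X) \le \rank(X) \le d$, which makes the requirement $p_i \ge c \tau_i(X) \log(m/\delta)/\epsilon^2$ compatible with the normalization $\sum_i p_i = m$ precisely when $m = O(d \log(d/\epsilon)/\epsilon^2)$, by solving the self-referential bound for $m$. Moreover, the reweighting by $W^{1/2}$ turns the $\ell_1$ question about $X$ into an $\ell_2$ question about $U$, where leverage-score sampling is controlled by matrix Bernstein: for any fixed $\beta$, the Lewis-weighted variance of $\sum_i \sigma_i [SX\beta]_i$ is at most $O(\epsilon^2 \norm{X\beta}_1^2 / \log(m/\delta))$.

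The heart of the argument is bounding the Rademacher process
\[
R(S, \sigma) = \sup_{\beta: \norm{X\beta}_1 = 1} \left| \sum_{i=1}^m \sigma_i [SX\beta]_i \right|.
\]
A Hoeffding-type tail bound, using the variance estimate above, gives $\Pr_\sigma[|\sum_i \sigma_i [SX\beta]_i| > \epsilon] \le \exp(-\Omega(\log(m/\delta)))$ for each fixed $\beta$. The main obstacle, and where I would spend the most care, is that a naive $\epsilon$-net of $T = \{X\beta : \norm{X\beta}_1 = 1\}$ in the Euclidean metric has $\epsilon^{-\Theta(d)}$ points, so a plain union bound loses an extra $d$ factor in the sample complexity. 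I would instead invoke Talagrand's generic chaining with Maurey-type empirical covering bounds for $T$, viewing it as the image of the $\ell_1$-unit ball under $X$; these give covering numbers with a much milder dependence on the ambient dimension. Dudley's entropy integral then yields $\E[R] \lesssim \epsilon$ at the stated sample complexity, and combining with Talagrand's upper-tail inequality for empirical process suprema (taking $\ell = \Theta(\log(1/\delta))$) gives $\E[R^\ell] \le \epsilon^\ell \delta$ as claimed.

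Finally, Markov's inequality converts the moment bound into the high-probability statement $R \le \epsilon$ with probability $1 - \delta$. A standard symmetrization plus Ledoux--Talagrand contraction argument (using that $|\cdot|$ is $1$-Lipschitz) transfers this to the two-sided $\ell_1$ subspace embedding $\norm{SX\beta}_1 = (1 \pm \epsilon) \norm{X\beta}_1$ for all $\beta$, which is the form in which the result is typically applied.
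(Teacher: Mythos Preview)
The paper does not give its own proof of this theorem; it simply cites Cohen--Peng \cite{cohen2015lp}, pointing to Lemma~7.4 and the moment bound on page~29 of the arXiv version. Your plan to follow exactly that argument is therefore the same approach the paper takes.

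One correction to your sketch: the set $T = \{X\beta : \norm{X\beta}_1 = 1\}$ is \emph{not} the image of the $\ell_1$-unit ball of $\R^d$ under $X$; it is the intersection of the column space of $X$ with the $\ell_1$-unit sphere in $\R^n$. Maurey's sparse-approximation lemma covers convex hulls of point sets, which is the wrong picture here. The entropy estimates Cohen--Peng actually use are for a $d$-dimensional linear subspace of $\ell_1^n$, obtained through the Lewis change of density (which makes the reweighted $\ell_2$ and $\ell_1$ norms comparable on the subspace) together with a dual-Sudakov / Bourgain--Lindenstrauss--Milman type bound. Apart from this mislabeling, your outline---Lewis reweighting so that the rows of $W^{1/2}X$ have leverage scores $\tau_i(X)$ summing to at most $d$, chaining to control the Rademacher supremum, and a high-moment tail bound at $\ell = \Theta(\log(1/\delta))$---matches the Cohen--Peng proof that the paper invokes.
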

Theorem \ref{thm:lewis} is implicit in \cite{cohen2015lp}, following from the proof of Lemma 7.4, which shows a high probability bound on $|\norm{SX\beta}_1 - 1|$ via the moment bound stated above. This moment bound is proven on page 29 of the arXiv version. 
We will translate the above moment bound to give approximate bounds for classification loss functions like the ReLU, logistic loss, and hinge loss, using the following standard result on Rademacher complexities:

\begin{theorem}[Ledoux-Talagrand contraction, c.f. \cite{duchi}]\label{thm:lt}
Consider  $V \subseteq \R^m$, along with $L$-Lipschitz functions $f_i: \R \rightarrow \R$ with $f_i(0) = 0$. 
Then for any $\ell > 1$, if $\sigma \in \{-1,1\}^m$ is chosen with independent Rademacher entries,
$$\E_{\sigma} \left [\sup_{v \in V} \left |\sum_{i = 1}^m \sigma_i f_i(v_i) \right |^\ell \right ] \le (2L)^\ell \cdot \E_{\sigma} \left [ \sup_{v \in V} \left | \sum_{i = 1}^m \sigma_i v_i \right |^\ell \right ].$$
\end{theorem}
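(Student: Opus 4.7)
The plan is to deduce Theorem~\ref{thm:lt} from the classical Ledoux--Talagrand contraction inequality in its convex-function form: for any convex nondecreasing $\Phi:\R_+\to\R_+$, any bounded $V\subseteq\R^m$, and any $1$-Lipschitz $\phi_i:\R\to\R$ with $\phi_i(0)=0$,
\[
\E_\sigma \Phi\!\Big(\sup_{v\in V} \Big|\textstyle\sum_i\sigma_i \phi_i(v_i)\Big|\Big)\;\le\;\E_\sigma \Phi\!\Big(2\sup_{v\in V}\Big|\textstyle\sum_i\sigma_i v_i\Big|\Big).
\]
Given this, the theorem follows immediately by taking $\Phi(x)=x^\ell$ (convex nondecreasing on $\R_+$ for $\ell\ge 1$) and $\phi_i:=f_i/L$ (still $1$-Lipschitz, still sending $0\mapsto 0$); positive homogeneity of $x\mapsto x^\ell$ together with the $L^{-\ell}$ coming from the rescaling combine to turn the multiplicative $2$ inside $\Phi$ into the $(2L)^\ell$ outside the expectation.

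To establish the convex-$\Phi$ contraction, I would first prove its ``no absolute values'' form,
\[
\E_\sigma \Phi\!\Big(\sup_{v\in V} \textstyle\sum_i\sigma_i\phi_i(v_i)\Big)\;\le\;\E_\sigma \Phi\!\Big(\sup_{v\in V}\textstyle\sum_i\sigma_i v_i\Big),
\]
by induction on $m$. Conditioning on $\sigma_1,\ldots,\sigma_{m-1}$ and writing $H(v):=\sum_{i<m}\sigma_i\phi_i(v_i)$, the inductive step reduces to a one-coordinate replacement lemma: for any $H:V\to\R$ and any $1$-Lipschitz $\phi$ with $\phi(0)=0$,
\[
\tfrac{1}{2}\Phi\!\big(\sup_v (H(v)+\phi(v_m))\big)+\tfrac{1}{2}\Phi\!\big(\sup_v (H(v)-\phi(v_m))\big)\;\le\; \tfrac{1}{2}\Phi\!\big(\sup_v(H(v)+v_m)\big)+\tfrac{1}{2}\Phi\!\big(\sup_v(H(v)-v_m)\big).
\]
This is the technical heart: combine the two suprema on the left-hand side into a joint supremum over $(v,v')\in V\times V$, invoke monotonicity and convexity of $\Phi$, and then apply $\phi(v_m)-\phi(v'_m)\le |v_m-v'_m|$ in tandem with the freedom to relabel $v\leftrightarrow v'$, which pins down the sign so the Lipschitz bound can be used in the right direction.

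The final step is to upgrade from the no-absolute-values form to the version with absolute values, which produces the factor of $2$. Writing $F_v:=\sum_i\sigma_i\phi_i(v_i)$, the identity $\sup_v|F_v|=\max(\sup_v F_v,\sup_v(-F_v))$ combined with the sign symmetry $\sigma\stackrel{d}{=}-\sigma$ and convexity/monotonicity of $\Phi$ yields $\E\Phi(\sup_v|F_v|)\le \E\Phi(2\sup_v F_v)$; applying the first-stage no-absolute-values contraction to both $F_v$ and $-F_v$ then completes the argument.

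The main obstacle is the one-coordinate replacement lemma, where the supremum, the nonlinear $\phi$, and the convex $\Phi$ all interact. Executing the $v\leftrightarrow v'$ doubling so that the Lipschitz condition applies in the intended direction is the subtle step; once that is in hand, the rest is a routine induction over coordinates plus a symmetrization bookkeeping step to introduce the absolute values.
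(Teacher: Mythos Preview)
The paper does not prove Theorem~\ref{thm:lt}; it is quoted as a standard tool with a citation (``c.f.~\cite{duchi}''), so there is no in-paper proof to compare against. Your proposal reconstructs exactly the classical Ledoux--Talagrand argument: rescale to $1$-Lipschitz contractions, specialize $\Phi(x)=x^\ell$, establish the one-sided (no absolute values) inequality by a coordinate-by-coordinate induction whose heart is the one-coordinate replacement lemma, and finally symmetrize to restore the absolute values at the cost of the factor $2$. This is the textbook route (e.g., Theorem~4.12 in Ledoux--Talagrand, or the lecture notes you implicitly cite), and your identification of the one-coordinate lemma as the only nontrivial step is accurate.

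One small point of care in your last paragraph: the inequality $\E\Phi(\sup_v|F_v|)\le \E\Phi(2\sup_v F_v)$ as written implicitly uses $\sup_v F_v\ge 0$, which need not hold for arbitrary $V$. The standard repair is either to note that $\Phi(\max(A,B))\le \Phi(A_+)+\Phi(B_+)$ and then apply the one-sided contraction with the convex nondecreasing function $\tilde\Phi(x):=\Phi((2x)_+)$, or equivalently to first extend $\Phi$ to all of $\R$ by $\Phi(x)=\Phi(0)$ for $x\le 0$ before running the argument. Either way the bookkeeping goes through and the constant $(2L)^\ell$ is unchanged, so your plan is sound.
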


\section{Warm Up: Coresets for ReLU Regression}

We start by  showing that $\ell_1$ Lewis weight sampling yields a  $(1+\epsilon)$-relative error coreset for ReLU regression, under the complexity assumption of Def. \ref{def:complexity}. Our proofs for log loss, hinge loss, and other hinge-like loss functions will follow a similar structure, with some added complexities. 

We first show that Lewis weight sampling gives a coreset with additive error $\epsilon \norm{X}_1$. By setting $\epsilon'  = \epsilon/\mu(X)$, we then easily obtain a relative error coreset under the assumption of Def. \ref{def:complexity}.

\begin{theorem}[ReLU Regression  -- Additive Error Coreset]\label{thm:relu}
Consider $X \in \R^{n \times d}$ and let $\relu(z) = \max(0,z)$ for all $z \in \R$. 
For a set of sampling values $p_i$ with $\sum_{i=1}^n p_i = m$ and $p_i \ge \frac{c \cdot \tau_i(X) \log(m/\delta)}{\epsilon^{2}}$ for all $i$, where $c$ is a universal constant, if we generate $S \in \R^{m \times n}$ with each row chosen independently as the $i^{th}$ standard basis vector times $1/p_i$ with probability $p_i/m$ then with probability at least $1-\delta$, for all $\beta \in \R^d$,
\begin{align*}
    \left |\sum_{i=1}^m  [S \relu(X\beta)]_i - \sum_{i=1}^n \relu(X\beta)_i\right | \le \epsilon \norm{X\beta}_1.
\end{align*}
If each $p_i$ is a scaling of a constant factor approximation to the Lewis weight $\tau_i(X)$, $S$ has $m = O\left (\frac{d \log(d/(\delta \epsilon))}{\epsilon^2}\right)$ rows. 
\end{theorem}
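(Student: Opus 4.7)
The plan is to reduce the additive-error bound to the moment bound in Theorem~\ref{thm:lewis} by combining a ghost-sample symmetrization with the Ledoux--Talagrand contraction principle (Theorem~\ref{thm:lt}), exploiting that $\relu$ is $1$-Lipschitz with $\relu(0) = 0$.

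First, since both $\relu$ and $\norm{\cdot}_1$ are positively homogeneous and the supremum ultimately runs over all of $\R^d$ (so $-\beta$ is always included), it suffices to show
$$\sup_{\beta:\, \norm{X\beta}_1 = 1}\left|\sum_{j=1}^m [S\relu(X\beta)]_j - \sum_{i=1}^n \relu(X\beta)_i\right| \le \epsilon$$
with probability at least $1 - \delta$; the case $X\beta = 0$ is trivial. A clean intermediate observation is that, because each row of $S$ is a \emph{positive} multiple of a standard basis vector, $[S\relu(X\beta)]_j = \relu([SX\beta]_j)$ coordinatewise. This recasts both quantities as sums of ReLUs applied to the coordinates of $SX\beta$ and $X\beta$, which is exactly the form needed for Theorem~\ref{thm:lt}.

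Writing $g(\beta) = \sum_j \relu([SX\beta]_j)$ and $G(\beta) = \sum_i \relu(X\beta)_i$, a short calculation confirms $\E_S[g(\beta)] = G(\beta)$. I would then apply the standard ghost-sample symmetrization to obtain
$$\E_S\left[\sup_{\beta:\,\norm{X\beta}_1=1} |g(\beta) - G(\beta)|^\ell\right] \le 2^\ell\,\E_{S,\sigma}\left[\sup_{\beta:\,\norm{X\beta}_1=1}\left|\sum_{j=1}^m \sigma_j\,\relu([SX\beta]_j)\right|^\ell\right],$$
and, conditioning on $S$, invoke Theorem~\ref{thm:lt} with $V = \{SX\beta : \norm{X\beta}_1 = 1\} \subseteq \R^m$ and $f_j = \relu$ to peel off the ReLU at the cost of another factor of $2^\ell$:
$$\E_\sigma\left[\sup_{\beta:\,\norm{X\beta}_1=1}\left|\sum_{j} \sigma_j\,\relu([SX\beta]_j)\right|^\ell\right] \le 2^\ell\,\E_\sigma\left[\sup_{\beta:\,\norm{X\beta}_1=1}\left|\sum_j \sigma_j [SX\beta]_j\right|^\ell\right].$$
The remaining expectation is precisely what Theorem~\ref{thm:lewis} controls; running that theorem with parameter $\epsilon' = \epsilon/4$ bounds the right-hand side by $(\epsilon')^\ell \delta$, and the two factors of $2^\ell$ multiply into $4^\ell$ to give $\E_S[\sup |g - G|^\ell] \le \epsilon^\ell \delta$. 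Markov's inequality yields the uniform bound with probability $1-\delta$, and the sample complexity $m = O(d\log(d/(\delta\epsilon))/\epsilon^2)$ is inherited directly from Theorem~\ref{thm:lewis} using $\sum_i \tau_i(X) = d$.

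The main obstacle I anticipate is purely bookkeeping: lining up the normalization conventions between Theorems~\ref{thm:lewis} and~\ref{thm:lt} so that the same supremum set is referenced throughout, and carefully tracking the positive homogeneity identity $[S\relu(X\beta)]_j = \relu([SX\beta]_j)$ that makes Ledoux--Talagrand directly applicable. This last identity is exactly what will fail for losses like the hinge or logistic loss that are not positively homogeneous and only approximately equal ReLU in $\ell_\infty$; that is presumably why the main theorems of the paper require the more delicate analysis hinted at in the introduction, while this ``warm-up'' case is a clean chaining of the two cited results.
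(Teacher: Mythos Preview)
Your proposal is correct and follows essentially the same route as the paper: restrict to $\norm{X\beta}_1 = 1$ by positive homogeneity, symmetrize to pick up a factor of $2^\ell$, apply Ledoux--Talagrand (Theorem~\ref{thm:lt}) with $f_i = \relu$ over $V = \{SX\beta : \norm{X\beta}_1 = 1\}$ for another $2^\ell$, and then invoke the Lewis-weight moment bound (Theorem~\ref{thm:lewis}) followed by Markov. Your explicit remark that $[S\relu(X\beta)]_j = \relu([SX\beta]_j)$ via positive homogeneity of $\relu$ is the key identity the paper leaves implicit when it writes $f_i(z) = \relu(z)$, and your diagnosis that this is exactly what breaks for the hinge and logistic losses is spot on.
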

\begin{corollary}[ReLU Regression -- Relative Error Coreset]\label{cor:relu}
Consider the setting of Theorem \ref{thm:relu}, where $\sum_{i=1}^n p_i = m$ and $p_i \ge \frac{c \cdot \tau_i(X) \log(m/\delta) \cdot \mu(X)^2}{\epsilon^{2}}$ for all $i$. With probability at least $1-\delta$, $\forall \, \beta \in \R^d$,
$
     \left |\sum_{i=1}^m [S \relu(X\beta)]_i - \sum_{i=1}^n \relu(X\beta)_i \right |  \le \epsilon \cdot \sum_{i=1}^n \relu(X\beta)_i.
$
If each $p_i$ is a scaling of a constant factor approximation to the Lewis weight $\tau_i(X)$, $S$ has $m = O\left (\frac{d \log(d/(\delta \epsilon)) \cdot \mu(X)^2}{\epsilon^2}\right)$ rows. 
\end{corollary}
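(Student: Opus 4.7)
The plan is to derive Corollary \ref{cor:relu} as a straightforward consequence of Theorem \ref{thm:relu}: invoke the theorem at a tighter accuracy parameter $\epsilon' \eqdef \epsilon/(2\mu(X))$, and then use Def. \ref{def:complexity} to replace $\norm{X\beta}_1$ in the additive bound with a constant multiple of $\sum_{i=1}^n \relu(X\beta)_i$.

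First I would verify that the sampling probabilities required in the corollary, $p_i \ge c \tau_i(X) \log(m/\delta) \mu(X)^2/\epsilon^2$, match (up to an absorbable constant) those Theorem \ref{thm:relu} requires at accuracy $\epsilon'$, namely $p_i \ge c \tau_i(X) \log(m/\delta)/(\epsilon')^2$. Applying Theorem \ref{thm:relu} at this $\epsilon'$ yields, with probability at least $1-\delta$, simultaneously for every $\beta \in \R^d$,
\begin{align*}
\left|\sum_{i=1}^m [S\relu(X\beta)]_i - \sum_{i=1}^n \relu(X\beta)_i\right| \le \frac{\epsilon}{2\mu(X)} \cdot \norm{X\beta}_1.
\end{align*}

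The next step is to control $\norm{X\beta}_1$ by $\sum_{i=1}^n \relu(X\beta)_i = \norm{(X\beta)^+}_1$. Def. \ref{def:complexity} gives $\norm{(X\beta)^+}_1 \le \mu(X) \cdot \norm{(X\beta)^-}_1$ for all $\beta$. The reverse inequality $\norm{(X\beta)^-}_1 \le \mu(X) \cdot \norm{(X\beta)^+}_1$ follows by applying Def. \ref{def:complexity} to $-\beta$ and noting that the positive and negative parts simply swap under negation. These two inequalities together force $\mu(X) \ge 1$ whenever $X\beta \neq 0$, and therefore
\begin{align*}
\norm{X\beta}_1 = \norm{(X\beta)^+}_1 + \norm{(X\beta)^-}_1 \le (1+\mu(X)) \cdot \sum_{i=1}^n \relu(X\beta)_i \le 2\mu(X) \cdot \sum_{i=1}^n \relu(X\beta)_i.
\end{align*}
Substituting this bound into the additive guarantee above gives the claimed relative error $\epsilon \cdot \sum_{i=1}^n \relu(X\beta)_i$. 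The degenerate case $X\beta = 0$ (equivalently $\sum_i \relu(X\beta)_i = 0$, which by the reverse inequality forces $X\beta = 0$) is trivial since both sides vanish. The sample count follows by plugging $\epsilon' = \epsilon/(2\mu(X))$ into the bound from Theorem \ref{thm:relu}, with any resulting $\log \mu(X)$ factor in the logarithm either folded into the constant or absorbed under $\tilde O(\cdot)$.

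I do not expect a real obstacle: the corollary is essentially a mechanical repackaging of Theorem \ref{thm:relu}. The only point worth flagging is the symmetry argument $\beta \mapsto -\beta$, which is what turns the one-sided inequality in Def. \ref{def:complexity} into two-sided control of $\norm{X\beta}_1$ by the $\relu$ sum, and thereby converts additive error in $\norm{X\beta}_1$ into relative error in the loss.
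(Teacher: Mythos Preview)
Your proposal is correct and follows essentially the same argument as the paper: both identify $\sum_i \relu(X\beta)_i = \norm{(X\beta)^+}_1$, use the $\beta \mapsto -\beta$ symmetry to get $\norm{X\beta}_1 \le (1+\mu(X))\norm{(X\beta)^+}_1$, and then apply Theorem~\ref{thm:relu} with $\epsilon' = \Theta(\epsilon/\mu(X))$. The only cosmetic difference is that the paper keeps the factor $1+\mu(X)$ where you bound it by $2\mu(X)$, and you additionally spell out the degenerate case $X\beta=0$ and the harmless $\log\mu(X)$ in the sample count.
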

\begin{proof}[Proof of Corollary \ref{cor:relu}]
We have
\begin{align}\label{eq:plusBoundReLU}
\sum_{i=1}^n \relu(X\beta)_i = \sum_{i: [X\beta]_i \ge 0} [X\beta]_i = \norm{(X\beta)^+}_1.
\end{align}
Additionally, since by definition $\mu(X) = \sup_{\beta \neq 0} \frac{\norm{(X\beta)^+}_1}{\norm{(X\beta)^-}_1} = \sup_{\beta \neq 0} \frac{\norm{(X\beta)^-}_1}{\norm{(X\beta)^+}_1}$,
\begin{align}\label{eq:l1BoundReLU}
    \frac{\norm{X\beta}_1}{\norm{(X\beta)^+}_1} = 1 +  \frac{\norm{(X\beta)^-}_1}{\norm{(X\beta)^+}_1} \le 1 + \mu(X).
\end{align}
Combining \eqref{eq:plusBoundReLU} with \eqref{eq:l1BoundReLU} gives that $\sum_{i=1}^n \relu(X\beta)_i \ge \frac{1}{1+\mu(X)} \cdot \norm{X\beta}_1$, which then completes the corollary after applying Theorem \ref{thm:relu} with $\epsilon' = \frac{\epsilon}{1+\mu(X)}.$
\end{proof}
\begin{proof}[Proof of Theorem \ref{thm:relu}]
We prove the theorem restricted to $\beta$ such that $\norm{X\beta}_1 = 1$.  Since the ReLU function is linear in that $\relu(cz) = c \cdot \relu(z)$, this yields the complete theorem via scaling. 
It suffices to prove that there exists some  $\ell > 0$ such that
\begin{align*}
   B \eqdef \E_{S} \left [\sup_{\beta: \norm{X\beta}_1 = 1} \left |\sum_{i=1}^m [S \relu(X\beta)]_i - \sum_{i=1}^n  \relu(X\beta)_i \right |^\ell \right ] \le \epsilon^\ell \cdot \delta.
\end{align*}
The theorem then follows via Markov's inequality and the monotonicity of $z^\ell$ for $z,\ell \ge 0$.
Via a standard symmetrization argument (c.f. the Proof of Theorem 7.4 in \cite{cohen2015lp}) we have
\begin{align*}
B \le 2^\ell \cdot \E_{S,\sigma} \left [\sup_{\beta: \norm{X\beta}_1 = 1} \left | \sum_{i=1}^m \sigma_i [S\relu(X\beta)]_i \right |^\ell \right ],
\end{align*}
where $\sigma \in \{-1,1\}^m$ has independent Rademacher random entries. 
We can then apply, for each fixed value of $S$ the Ledoux-Talagrand contraction theorem (Theorem \ref{thm:lt}) with $V = \{SX\beta: \norm{X\beta}_1 = 1\}$ and $f_i(z) = \relu(z)$ for all $i$. $f_i(z)$ is $1$-Lipschitz with $f(0) = 0$. This gives
\begin{align*}
B \le 4^\ell \cdot\E_{S,\sigma} \left [ \sup_{\beta: \norm{X\beta}_1 = 1} \left | \sum_{i=1}^m \sigma_i [SX\beta]_i \right |^\ell \right ] \le (4\epsilon)^\ell \cdot \delta
\end{align*}
for some $\ell > 1$ by Theorem \ref{thm:lewis}. This completes the theorem after adjusting $\epsilon$ by a factor of $4$, which only affects the sample complexity by  a constant factor.
\end{proof}

\section{Extension to the Hinge Like Loss Functions}

We next extend Theorem \ref{thm:relu} to a family of `nice hinge functions' which includes the hinge loss $f(z) = \max(0,1+z)$ and the log loss $f(z) = \ln(1+e^z)$. These functions present two additional challenges: 1) they are generally not linear in that $f(c \cdot z) \neq c \cdot f(z)$, an assumption which is used in the proof of Theorem \ref{thm:relu} to restrict to considering $\beta$ with $\norm{X\beta}_1 = 1$ and 2) they are not contractions with $f(0) = 0$, a property which was used to apply the Ledoux-Talagrand contraction theorem.

\begin{definition}[Nice Hinge Function]\label{def:nice} We call $f: \R \rightarrow \R^+$ an $(L,a_1,a_2)$-nice hinge function if for fixed constants $L,a_1$ and $a_2$,
\begin{center}
    (1) $f$ is $L$-Lipschitz \hspace{1em} (2) $|f(z) - \relu(z)| \le a_1$ for all $z$ \hspace{1em} (3) $f(z) \ge a_2$  for all $z \ge 0$.
    \end{center}
\end{definition}

We start with an additive error coreset result for nice hinge functions. We then show that under the additional assumption of $a_2 > 0$, the additive error achieved is small compared to $\sum_{i=1}^n f(X\beta)_i$, yielding a relative error coreset. This gives our main results for both the hinge loss and log loss, which are $(1,1,1)$-nice and $(1,\ln 2, \ln 2)$-nice hinge functions respectively.

\begin{theorem}[Nice Hinge Function -- Additive Error Coreset]\label{thm:hinge}
Consider $X \in \R^{n \times d}$ and let $f: \R \rightarrow \R^+$ be an $(L,a_1,a_2)$-nice hinge function (Def. \ref{def:nice}).
For a set of sampling values $p_i$ with $\sum_{i=1}^n p_i = m$ and $p_i \ge \frac{C \max(\tau_i(X),1/n)}{\epsilon^{2}}$ for all $i$, where $C = c \cdot \max(1,L,a_1)^2 \cdot \log \left (\frac{\log (n\max(1,L,a_1)/\epsilon ) m}{\delta} \right )$ and $c$ is a fixed constant, if we generate $S \in \R^{m \times n}$ with each row chosen independently as the $i^{th}$ standard basis vector times $1/p_i$ with probability $p_i/m$, then with probability at least $1-\delta$, $\forall \, \beta \in \R^d$,
\begin{align*}
    \left |\sum_{i=1}^m  [Sf(X\beta)]_i - \sum_{i=1}^n f(X\beta)_i\right | \le \epsilon \cdot (\norm{X\beta}_1 + n).
\end{align*}
\end{theorem}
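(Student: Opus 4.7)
The plan is to decompose $f(z) = \relu(z) + g(z)$ with $g = f - \relu$. By Def.~\ref{def:nice}(2), $|g(z)| \le a_1$ for all $z$, and since $f$ is $L$-Lipschitz and $\relu$ is $1$-Lipschitz, $g$ is $(L+1)$-Lipschitz. The target difference naturally splits as
\begin{align*}
\sum_{j=1}^m [Sf(X\beta)]_j - \sum_{i=1}^n f(X\beta)_i &= \left(\sum_j [S\relu(X\beta)]_j - \sum_i \relu(X\beta)_i\right) \\
&\quad + \left(\sum_j [Sg(X\beta)]_j - \sum_i g(X\beta)_i\right).
\end{align*}
I would invoke Theorem~\ref{thm:relu} (with its internal $\epsilon$ absorbed into $C$) to bound the first term by $\epsilon \|X\beta\|_1$ uniformly in $\beta$. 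What remains is to show that the $g$-term is $O(\epsilon n)$ with high probability, uniformly over $\beta$.

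For the $g$-term I would further peel off the constant by writing $g(z) = \tilde g(z) + g(0)$ with $\tilde g = g - g(0)$, so $\tilde g$ is $(L+1)$-Lipschitz, $\tilde g(0)=0$, and $|g(0)|=|f(0)| \le a_1$ (Def.~\ref{def:nice}(2) at $z=0$). The constant-shift contribution $g(0)\bigl(\sum_j [S\mathbf{1}]_j - n\bigr)$ is an i.i.d.\ sum whose summands lie in $[0, n\epsilon^2/C]$ and whose expectation is $n$, so a Bernstein bound handles it by $\epsilon n$ once $C = \Omega(a_1^2 \log(1/\delta))$. For the remaining $\tilde g$-deviation I would follow the template of Theorem~\ref{thm:relu}: symmetrize, apply Ledoux--Talagrand contraction (Theorem~\ref{thm:lt}) with the Lipschitz-and-vanishing-at-$0$ function $\tilde g$, and then invoke Theorem~\ref{thm:lewis}. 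Restricted to $B_R = \{\beta : \|X\beta\|_1 \le R\}$, homogeneity of the \emph{linear} Rademacher quantity gives a moment bound of order $((L+1) R \epsilon')^\ell \delta$ for some $\ell > 1$, where $\epsilon'$ is the Lewis parameter.

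The hard part is that this LT-based bound grows with the radius $R$, while $\tilde g(X\beta)$ is not homogeneous in $\beta$, so I cannot rescale globally as in Theorem~\ref{thm:relu}. I would handle this by a dyadic scale decomposition $R_k = 2^k$ covering the range $\|X\beta\|_1 \in [\epsilon n/(L+1),\, a_1 n/\epsilon]$, using $K = O(\log(n \max(1,L,a_1)/\epsilon))$ scales and a union bound. This forces per-scale failure probability $\delta/K$ and is exactly where the outer $\log(\log(n\max(1,L,a_1)/\epsilon)\, m/\delta)$ factor in $C$ comes from. The two tails of $\|X\beta\|_1$ are then dispatched by deterministic bounds: below the range, $|\tilde g(z)| \le (L+1)|z|$ together with the $\ell_1$ subspace embedding consequence of Theorem~\ref{thm:lewis} gives $O(\epsilon n)$; above the range, the uniform bound $|\tilde g| \le 2a_1$ forces total error $\le O(a_1 n) \le \epsilon \|X\beta\|_1$. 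Finally, taking $\epsilon' = \Theta(\epsilon/\max(1,L,a_1))$ converts the per-scale $O((L+1) R_k \epsilon')$ bound into the claimed $\epsilon(\|X\beta\|_1 + n)$ error and accounts for the $\max(1,L,a_1)^2$ factor in $C$.
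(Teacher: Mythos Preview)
Your outline is correct and follows essentially the same route as the paper: a dyadic decomposition in $\|X\beta\|_1$ over $O(\log(n\max(1,L,a_1)/\epsilon))$ scales, a Bernstein bound for the constant-shift term, symmetrization plus Ledoux--Talagrand plus Theorem~\ref{thm:lewis} on each shell, and separate arguments for the very-small and very-large norm tails. The only organizational difference is that you split $f=\relu+\tilde g+g(0)$ globally and invoke Theorem~\ref{thm:relu} once for the $\relu$ piece, whereas the paper splits $f=\bar f+f(0)$ with $\bar f=f-f(0)$ on the bounded-norm range and uses $|f-\relu|\le a_1$ only for the large-norm tail; both decompositions lead to the same bounds.

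One technical point deserves care: you cannot literally ``follow the template of Theorem~\ref{thm:relu}'' when applying Theorem~\ref{thm:lt} to $\tilde g$. That proof takes $f_i=\relu$ for all $i$ and relies on the identity $[S\relu(X\beta)]_i=\relu([SX\beta]_i)$, which holds only because $\relu$ is positively homogeneous. Your $\tilde g$ is not homogeneous, so $[S\tilde g(X\beta)]_i=(1/p_{j_i})\,\tilde g([X\beta]_{j_i})\ne\tilde g([SX\beta]_i)$ in general. The fix is to use the row-dependent contractions $f_i(z)=p_{j_i}^{-1}\tilde g(p_{j_i}z)$, which are still $(L+1)$-Lipschitz with $f_i(0)=0$ and satisfy $f_i([SX\beta]_i)=[S\tilde g(X\beta)]_i$; this is exactly what the paper does (with $\bar f$ in place of your $\tilde g$). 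With that adjustment your argument goes through.
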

Observe that for a fixed function $f$, $L,a_1$ are constant and so, if each $p_i$ is a scaling of a constant factor approximation to $\max(\tau_i(X),1/n)$, $S$ has $m = O\left (\frac{d \log \left (\log(n/\epsilon) d/(\delta \epsilon) \right )}{\epsilon^2}\right) = \tilde O \left (\frac{d}{\epsilon^2} \right)$ rows. 
\begin{proof}
Let $J =  c_1 \log(\frac{n\max(1,L,a_1)}{\epsilon})$ for some constant $c_1$. We will show that for each integer $j \in [-J,J]$, with probability at least $1-\frac{\delta}{2J}$,
\begin{align}\label{eq:inRange}
   \sup_{\beta: \norm{X\beta}_1 \in [2^j,2^{j+1}]} \left |\sum_{i=1}^m [Sf(X\beta)]_i - \sum_{i=1}^n  f([X\beta]_i) \right | \le \epsilon \cdot 2^j + \epsilon \cdot n.
\end{align}
Via a union bound this gives the theorem for all $\beta$ with $\norm{X\beta}_1 \in [2^{-J},2^{J}]$. We then just need to handle the case of $X\beta$ with norm outside this range -- i.e. when $\norm{X\beta}_1$ is polynomially small or polynomially large in $n$ and the other problem parameters. We will take a union bound over the failure probabilities for these cases, and after adjusting $\delta$ by a constant, have the complete theorem. We make the argument for $\norm{X\beta}_1$ outside $[2^{-J},2^{J}]$ first.

 \smallskip 
 
 \noindent \textbf{Small Norm.} For $\beta$ with $\norm{X\beta}_1 \le 2^{-J}$, $\norm{X\beta}_\infty \le 2^{-J} \le \frac{\epsilon}{L}$. Thus, $f(X\beta)_i \in [f(0)-\epsilon,f(0)+\epsilon]$ for all $i$.
Thus by triangle inequality, and the fact that $f(0) \le \relu(0)+a_1 = a_1$:
\begin{align}\label{eq:smallTri}
\sup_{\beta: \norm{X\beta}_1 \le 2^{-J}} \left |\sum_{i=1}^m [Sf(X\beta)]_i - \sum_{i=1}^n  f(X\beta)_i \right | \le a_1 \cdot \left | \sum_{i=1}^m 1/p_{j_i} - n\right | +\epsilon \cdot \left ( \sum_{i=1}^m 1/p_{j_i} + n \right ),
\end{align}
where $1/p_{j_i}$ is value of the single nonzero entry in the $i^{th}$ row of $S$, which samples index $j_i$ from $f(X\beta)$. Let $Z_1,\ldots, Z_m$ be i.i.d., each taking value $1/p_{j}$ with probability $\frac{p_{j}}{m}$ for all $j \in [n]$. Then 
\begin{align*}
\left | \sum_{i=1}^m 1/p_{j_i} - n\right | = \left | \sum_{i=1}^m (Z_i - \E Z_i)\right |.
\end{align*}
For all $j \in [n]$ we have $p_j \ge \frac{C \cdot \max(\tau_j(X),1/n)}{\epsilon^{2}} \ge \frac{c \cdot \max(1,a_1)^2 \cdot \log(J/\delta) }{n\epsilon^2 }$, so applying a Bernstein bound, if the constant $c$ is chosen large enough we have:
\begin{align}\label{eq:piBound}
\Pr \left [ \left | \sum_{i=1}^m 1/p_{j_i} - n\right | \ge \frac{\epsilon n}{\max(1,a_1)} \right ]
 \le 2 \exp \left ( - \frac{\epsilon^2  n^2/(2\max(1,a_1)^2)}{\frac{n^2 \epsilon^2}{c \log(J/\delta) \cdot \max(1,a_1)^2} + \frac{n^2 \epsilon^3}{c \log(J/\delta) \cdot \max(1,a_1)^3}} \right )
 \le \frac{\delta}{4J}. 
\end{align}
Combining \eqref{eq:piBound} with \eqref{eq:smallTri}, with probability at least $1-\delta$, we have
\begin{align*}
\sup_{\beta: \norm{X\beta}_1 \le 2^{-J}} \left |\sum_{i=1}^m [Sf(X\beta)]_i - \sum_{i=1}^n  f(X\beta)_i \right | \le  \frac{a_1 \cdot \epsilon n}{\max(1,a_1)} +\epsilon \cdot \left (2+\frac{\epsilon}{\max(1,a_1)}\right ) n \le 4 \epsilon \cdot n.
\end{align*}
Adjusting constants on $\epsilon$, this gives the theorem for $\beta$ with $\norm{X\beta}_1 \le 2^{-J}$.

\smallskip

 \noindent \textbf{Large Norm.} We next consider $\beta$ with $\norm{X\beta}_1 \ge 2^{J}$. Since by assumption $|f(z) - \relu(z) | \le a_1$ for all $z \in \R$, we can apply triangle inequality to give for any $\beta \in \R^d$,
 \begin{align*}
\left |\sum_{i=1}^m [Sf(X\beta)]_i - \sum_{i=1}^n  f(X\beta)_i \right  | \le  \left |\sum_{i=1}^m [S\relu(X\beta)]_i - \sum_{i=1}^n  \relu(X\beta)_i \right | + a_1 \cdot \left ( \sum_{i=1}^m 1/p_{j_i} + n \right ).
\end{align*}
Applying Theorem \ref{thm:relu} and the bound on $\sum_{i=1}^m 1/p_{j_i}$ given in \eqref{eq:piBound}, we thus have, with probability at least $1-2\delta$, for all $\beta$ with $\norm{X\beta}_1 \ge 2^J$,
\begin{align*}
\left |\sum_{i=1}^m [Sf(X\beta)]_i - \sum_{i=1}^n  f(X\beta)_i \right  | \le \frac{\epsilon}{2} \norm{X\beta}_1 + 3a_1 n \le \epsilon \norm{X\beta}_1,
\end{align*}
where the final bound uses that $\norm{X\beta}_1 \ge 2^{J} \ge \left (\frac{n \max(1,a_1)}{\epsilon} \right)^{c_1}$ for a large enough constant $c_1$.
This gives the theorem for $\beta$ with $\norm{X\beta}_1 \ge 2^{J}$.

\smallskip

 \noindent \textbf{Bounded Norm.}
We now return to proving that  \eqref{eq:inRange} holds for any $j \in [-J,J]$ with probability at least $1- \frac{\delta}{2J}$.
Let $\bar f(z) = f(z) - f(0)$. Then for any $\beta \in \R^d$ we have:
\begin{align*}
  \left |\sum_{i=1}^m [Sf(X\beta)]_i - \sum_{i=1}^n  f([X\beta]_i) \right | \le  \left |\sum_{i=1}^m [S \bar f(X\beta)]_i - \sum_{i=1}^n  \bar f([X\beta]_i) \right | + f(0) \cdot \left | \sum_{i=1}^m 1/p_{j_i} - n \right |.
 \end{align*}
 We again apply the bound on $\sum_{i=1}^m 1/p_{j_i}$ given in \eqref{eq:piBound} and the fact that $f(0) \le a_1$. This gives that with probability at least $1- \frac{\delta}{4J}$, for all $\beta \in \R^d$,
 \begin{align}\label{eq:inRange3}
  \left |\sum_{i=1}^m [Sf(X\beta)]_i - \sum_{i=1}^n  f([X\beta]_i) \right | &\le  \left |\sum_{i=1}^m [S \bar f(X\beta)]_i - \sum_{i=1}^n  \bar f([X\beta]_i) \right | + f(0) \cdot \frac{\epsilon n}{\max(1,a_1)}\nonumber\\
  &\le  \left |\sum_{i=1}^m [S \bar f(Xx)]_i - \sum_{i=1}^n  \bar f([Xx]_i) \right | + \epsilon \cdot n.
 \end{align}
Now, for $\ell \ge 0$, by a standard symmetrization argument (c.f. the proof of Theorem 7.4 in \cite{cohen2015lp}),
{\footnotesize
\begin{align*}
   B \eqdef \E_{S} \left [\sup_{\beta: \norm{X\beta}_1 \in [2^j,2^{j+1}]} \left |\sum_{i=1}^m [S \bar f(X\beta)]_i - \sum_{i=1}^n  \bar f(X\beta)_i \right |^\ell \right ] \le 2^\ell \E_{S,\sigma} \left [\sup_{\beta: \norm{X\beta}_1 \in [2^j,2^{j+1}]} \left | \sum_{i=1}^n \sigma_i [S \bar f(X\beta)]_i \right |^\ell \right ],
\end{align*}}
%
where $\sigma \in \{-1,1\}^m$ has independent Rademacher random entries. We can then apply, for each fixed value of $S$ the Ledoux-Talagrand contraction theorem (Thm. \ref{thm:lt}) with $V = \{SX\beta: \norm{X\beta}_1 \in [2^j, 2^{j+1}]\}$ and $f_i(z) = 1/p_{j_i} \cdot \bar f(p_{j_i} \cdot z)$. Note that $f_i(0) = 0$ since $\bar f(0) = 0$. Additionally, $f_i$ is $L$-Lipschitz since by assumption $f(z)$ is $L$-Lipschitz so $\bar f(p_{j_i} \cdot z)$ is $(p_{j_i} \cdot L)$-Lipschitz. We have,
\begin{align*}
f_i([SX\beta]_i) = 1/p_{j_i} \cdot \bar f(p_{j_i} \cdot 1/p_{j_i} [X\beta]_{j_i}) = [S \bar f(X\beta)]_i.
\end{align*}
So, applying Theorem \ref{thm:lewis}, for some $\ell > 1$ we have, since $p_i \ge \frac{C \tau_i(X)}{\epsilon^2}$ for $C = c \max(1,L,a_1)^2 \cdot \log(\log(n \max(1,L,a_1)/\epsilon)m/\delta) = \Omega (\max(1,L^2) \cdot \log(Jm/\delta))$,
\begin{align*}
B \le (4L)^\ell \cdot\E_{S,\sigma} \left [ \sup_{\beta: \norm{X\beta}_1 \in [2^j,2^{j+1}]} \left | \sum_{i=1}^n \sigma_i [SX\beta]_i \right |^\ell \right ] \le (4L)^\ell \cdot (\epsilon/L)^\ell \cdot \frac{\delta}{4J} \cdot (2^{j+1})^\ell.
\end{align*}
Adjusting $\epsilon$ by a constant, this gives via Markov's inequality that with probability at least $1-\frac{\delta}{4J}$,
\begin{align}\label{eq:inRange4}
\sup_{\beta: \norm{X\beta}_1 \in [2^j,2^{j+1}]} \left |\sum_{i=1}^m [S \bar f(X\beta)]_i - \sum_{i=1}^n  \bar f(X\beta)_i\right| \le \epsilon \cdot 2^j.
\end{align}
In combination with \eqref{eq:inRange3}, we then have that probability at least $1-\frac{\delta}{2J}$,
\begin{align*}
\sup_{\beta: \norm{X\beta}_1 \in [2^j,2^{j+1}]} \left |\sum_{i=1}^m [Sf(X\beta)]_i - \sum_{i=1}^n  f([X\beta]_i) \right | \le \epsilon \cdot 2^j + \epsilon  \cdot n.
\end{align*}
This gives \eqref{eq:inRange} and completes the theorem.
\end{proof}

\subsection{Relative Error Coresets}

Our relative error coreset result for nice hinge functions follows as a simple corollary of Theorem \ref{thm:hinge}. 

\begin{corollary}[Nice Hinge Function -- Relative Error Coreset]\label{cor:nicehinge}
Consider the setting of Theorem \ref{thm:hinge} under the additional assumption that  $a_2  > 0$. If $\sum_{i=1}^n p_i = m$ and $p_i \ge \frac{C \max(\tau_i(X),1/n) \cdot \mu(X)^2}{\epsilon^{2}}$ for all $i$, where $C = c \cdot \max(1,L,a_1,1/a_2)^{10} \cdot \log \left (\frac{\log (n\max(1,L,a_1,1/a_2) \cdot \mu(X)/\epsilon ) m}{\delta} \right )$ and $c$ is a fixed constant, with probability $\ge 1-\delta$, for all $\beta \in \R^d$,
$
     \left |\sum_{i=1}^m  [Sf(X\beta)]_i  - \sum_{i=1}^n f(X\beta)_i \right |  \le \epsilon \cdot \sum_{i=1}^n f(X\beta)_i.
$
\end{corollary}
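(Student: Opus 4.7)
The plan is to mimic the passage from Theorem~\ref{thm:relu} to Corollary~\ref{cor:relu}: apply Theorem~\ref{thm:hinge} with a rescaled accuracy $\epsilon' = \Theta(\epsilon/\mu(X))$ and absorb the additive error $\epsilon'(\|X\beta\|_1 + n)$ into the relative error $\epsilon \sum_i f(X\beta)_i$. The entire proof reduces to the pointwise lower bound
\[
\sum_{i=1}^n f(X\beta)_i \;\ge\; \frac{c(L, a_1, a_2)}{\mu(X)} \cdot \bigl(\|X\beta\|_1 + n\bigr) \qquad \forall\, \beta \in \R^d,
\]
where $c(L, a_1, a_2) = 1/\poly(L, a_1, 1/a_2)$. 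Given such a bound, Theorem~\ref{thm:hinge} invoked with $\epsilon' = c\epsilon/\mu(X)$ yields sample count $\tilde O(d \mu(X)^2/\epsilon^2)$, matching the statement, with all constant factors absorbed into the $\max(1, L, a_1, 1/a_2)^{10}$ prefactor of the sample requirement.

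To control $\|X\beta\|_1$, I would use that for $[X\beta]_i \ge 0$, Def.~\ref{def:nice} supplies both $f([X\beta]_i) \ge a_2$ and $f([X\beta]_i) \ge [X\beta]_i - a_1$. Taking the convex combination with weights $a_1/(a_1+a_2)$ and $a_2/(a_1+a_2)$ cancels the $-a_1$ term and gives $f([X\beta]_i) \ge \tfrac{a_2}{a_1+a_2}[X\beta]_i$. Summing and invoking Def.~\ref{def:complexity} exactly as in Corollary~\ref{cor:relu} gives $\sum_i f(X\beta)_i \ge \tfrac{a_2}{a_1+a_2}\|(X\beta)^+\|_1 \ge \tfrac{a_2}{(a_1+a_2)(1+\mu(X))}\|X\beta\|_1$, which is $\Omega(\|X\beta\|_1/\mu(X))$ since $\mu(X) \ge 1$ always (apply Def.~\ref{def:complexity} to both $\beta$ and $-\beta$). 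To control $n$, I use Lipschitzness: for $N_0 = \lvert\{i : [X\beta]_i \ge -a_2/(2L)\}\rvert$, the bound $f(z) \ge f(0) - L|z| \ge a_2 - L \cdot (a_2/(2L)) = a_2/2$ applies to every such $i$, giving $\sum_i f(X\beta)_i \ge (a_2/2) N_0$. The remaining $N_1 = n - N_0$ coordinates satisfy $[X\beta]_i < -a_2/(2L)$, so $\|(X\beta)^-\|_1 > (a_2/(2L)) N_1$; applying Def.~\ref{def:complexity} to $-\beta$ then yields $\|(X\beta)^+\|_1 \ge \|(X\beta)^-\|_1/\mu(X) \gtrsim N_1/\mu(X)$, and feeding this into the first piece gives $\sum_i f(X\beta)_i \gtrsim N_1/\mu(X)$. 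Averaging the two estimates (and using $\mu(X) \ge 1$) produces $\sum_i f(X\beta)_i \gtrsim n/\mu(X)$.

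Combining the two pieces gives the desired inequality. The main obstacle is purely bookkeeping: propagating the constants $L, a_1, 1/a_2$ through the convex combination and the Lipschitz threshold $a_2/(2L)$ cleanly, then verifying that they collect into at most the $\max(1, L, a_1, 1/a_2)^{10}$ factor asserted in the statement. Conceptually the argument closely mirrors Corollary~\ref{cor:relu}; the one new idea is that the Lipschitz condition in Def.~\ref{def:nice} lets us extend the lower bound $f \ge a_2$ slightly past $z = 0$, which is exactly what is needed to dominate the extra $n$ term in the additive guarantee of Theorem~\ref{thm:hinge}.
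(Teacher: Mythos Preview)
Your proposal is correct and follows essentially the same approach as the paper: establish the lower bound $\sum_i f(X\beta)_i \gtrsim (\|X\beta\|_1 + n)/\mu(X)$ by combining the $a_2$ and $\relu - a_1$ lower bounds on the nonnegative coordinates with the Lipschitz extension $f(z) \ge a_2/2$ for $z \ge -a_2/(2L)$ to control the $n$ term, then invoke Theorem~\ref{thm:hinge} with $\epsilon' = \Theta(\epsilon/\mu(X))$. The only cosmetic difference is that the paper obtains the factor $\gamma = \min(a_2/(2a_1), 1/2)$ via a two-range case split on $[X\beta]_i \in [0,2a_1]$ versus $[X\beta]_i \ge 2a_1$, whereas your convex-combination argument directly produces the equivalent factor $a_2/(a_1+a_2)$.
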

\begin{proof}
By \eqref{eq:l1BoundReLU} proven in Corollary \ref{cor:relu} and using the fact that $f$ is $(L,a_1,a_2)$-nice,
\begin{align}\label{eq:l1Bound}
\sum_{i=1}^n f(X\beta)_i 
&\ge \sum_{i: [X\beta]_i \in [0,2a_1]} f(X\beta)_i +  \sum_{i: [X\beta]_i \ge 2a_1} f(X\beta)_i\nonumber\\
&\ge \sum_{i: [X\beta]_i \in [0,2a_1]} a_2 +  \sum_{i: [X\beta]_i \ge 2a_1} \relu(X\beta)_i - a_1 \nonumber \\
&\ge \min \left (\frac{a_2}{2a_1}, \frac{1}{2} \right )\cdot \norm{(X\beta)^+}_1\nonumber\\
&\ge \min \left (\frac{a_2}{2a_1}, \frac{1}{2} \right )\cdot \frac{\norm{X\beta}_1}{\mu(X)+1}.
\end{align}
Let $\gamma \eqdef \min \left (\frac{a_2}{2a_1}, \frac{1}{2} \right )$. 
Now we claim that $\sum_{i=1}^n f(X\beta)_i \ge \frac{n a_2 \gamma}{4 \max(1,L) \cdot \mu(X)}$. If $\sum_{i=1}^n f(X\beta)_i \ge \frac{n a_2}{4}$ then this holds immediately since $\mu(X) \ge 1$, $\max(1,L) \ge 1$ and $\gamma \le 1$. Otherwise, assume that $\sum_{i=1}^n f(X\beta)_i \le \frac{n a_2}{4}.$ Since $f(z) \ge a_2$ for all $z \ge 0$ and since $f$ is $L$-Lipschitz, $f \left (z \right ) \ge \frac{a_2}{2}$ for all $z \ge -\frac{a_2}{2L}$. This implies that $X\beta$ has at most  $\frac{na_2/4}{a_2/2} = \frac{n}{2}$ entries $\ge  -\frac{a_2}{2L}$. Thus, $X\beta$ has at least $\frac{n}{2}$ entries $\le -\frac{a_2}{2L}$ and so  $\norm{(X\beta)^-}_1 \ge \frac{n a_2}{4 L} \le $. Thus, by the definition of $\mu(X)$ along with \eqref{eq:l1Bound},
\begin{align}\label{eq:cBound}
\sum_{i=1}^n f(X\beta)_i \ge \gamma \cdot \norm{(X\beta)^+}_1 \ge \frac{n a_2 \gamma}{4 L  \cdot \mu(X)} \ge \frac{n a_2 \gamma}{4 \max(1,L) \cdot \mu(X)}.
\end{align}
Combining \eqref{eq:l1Bound} with \eqref{eq:cBound} gives that
\begin{align*}
\sum_{i=1}^n f(X\beta)_i \ge \frac{\gamma \cdot \norm{X\beta}_1}{2\mu(X) +2} + \frac{n a_2 \gamma}{8 \max(1,L)\cdot  \mu(X)} \ge \left (\norm{X\beta}_1 + n \right ) \cdot  \frac{\gamma \cdot \min(1,a_2)}{8\max(1,L) \cdot \mu(X)+2}.
\end{align*}
%
%
%
This completes the corollary after applying Theorem \ref{thm:hinge} with 
\begin{align*}\epsilon' = \epsilon \cdot \frac{\gamma \cdot \min(1,a_2)}{8\max(1,L) \cdot \mu(X)+2} \ge \frac{\epsilon}{8 \max(1,L,a_1,1/a_2)^4 \cdot \mu(X)+2}.\end{align*}
\end{proof}

For fixed $f(\cdot)$, $L,a_1,a_2$ are constant and so, if each $p_i$ is a scaling of a constant factor approximation to $\max(\tau_i(X),1/n)$, $S$ has $m = O\left (\frac{d \mu(X)^2 \log \left (\log(n\mu(X)/\epsilon) d \mu(X)/(\delta \epsilon) \right )}{\epsilon^2}\right) = \tilde O\left (\frac{d \mu(X)^2}{\epsilon^2}\right )$ rows. This gives our main result for the hinge and log losses, which are $(1,1,1)$ and $(1,\ln 2, \ln 2)$-nice. 

\section{Empirical Evaluation}\label{sec:exp}
We now compare our method (\texttt{lewis}), square root of leverage score method (\texttt{l2s}) of \cite{munteanu2018coresets}, uniform sampling (\texttt{uniform}), and an oblivious sketching algorithm (\texttt{sketch}) of~\cite{MunteanuOmlorWoodruff:2021}. Our evaluation uses the codebase of \cite{munteanu2018coresets}, which was generously shared with us by the authors.

\medskip

\noindent{\bf Implementation.} Lewis weights are computed via an iterative algorithm given in  \cite{cohen2015lp}, which involves computing leverage scores of a reweighted input matrix in each iteration. We typically don't need many iterations to reach convergence -- for all datasets we used $20$ iterations and observed relative difference between successive iterations around $10^{-6}$.
Leverage scores are also needed by the \texttt{l2s} routine, and are computed 
via the $\text{numpy}$ \textit{qr} factorization routine when possible. One dataset (\tsc{Covertype}) involves an almost singular matrix, we resorted to the \textit{pinv} routine in $\text{numpy}$.  

We note that \cite{munteanu2018coresets} used a fast random sketching approach  to compute the leverage scores -- this can also be applied to Lewis weight computation. The number of iterations of the Lewis weight algorithm can also be reduced --  
it seems that roughly $5$ iterations are sufficient for practical purposes. Lewis weight computation will then take about $5$ times as much time as \texttt{l2s} weight computation.

\medskip

\noindent {\bf Datasets.} We use the same three datasets as in \cite{munteanu2018coresets}.
The \tsc{Webb Spam}\footnote{https://www.csie.ntu.edu.tw/~cjlin/libsvmtools/datasets/} data consists of 350,000 unigrams with $127$ features from web pages with $61\%$ positive labels. The task is is to classify as spam or not.
The other two datasets are loaded from scikit learn library\footnote{https://scikit-learn.org/}. \tsc{Covertype} consists of 581,012 cartographic observations of different forests with $54$ features and $49\%$ positive labels. The task is to predict the type of tree. 
\tsc{KDD Cup '99} has 494,021 points with $41$ features and $20\%$ positive labels. 
The task is to detect network intrusions. 

\medskip

\noindent{\bf Loss functions.} We evaluate the algorithms on two loss functions: 1) logistic loss 
$f(z) = \ln(1+e^{z})$  and 2) hinge loss $f(z) = \max(0,1+z).$ As before, we use $z = \langle \beta,x\rangle \cdot y.$ Note that \cite{munteanu2018coresets} gives guarantees only for logistic loss for \texttt{l2s}. 
We also evaluate 
the above two losses
with
regularization term 
$0.5 \| \beta\|_2^2.$ We evaluate  \texttt{sketch} only for logistic loss without any regularization -- which is what is was designed for. Though it sometime preforms reasonably in other cases, it can have very high variance or high error for certain combinations of loss functions and datasets. 

\medskip

\noindent {\bf Evaluation.} Our evaluation follows that of \cite{munteanu2018coresets}. Let $\tilde{\beta}$ be the 
parameter vector minimizing
the sum of the loss function  on the coreset and $\beta^*$ be the true minimizer. We report the relative loss 
$
\frac{|L(\beta^*) - L(\tilde{\beta})|}{L(\beta^*)},
$
where $L(\beta) = \sum_i^n f(\langle x_i,\beta\rangle \cdot y_i) $ is the sum of loss over all data points. Ideally, this ratio should be close to $0$. In Figure~\ref{fig:experiments}, we plot the log relative loss  as a function of coreset size. 

We observe that 
Lewis weights sampling 
performs better than all other methods on \tsc{KDD Cup '99} for both loss functions, with and without regularization. Our bounds for \texttt{lewis} give a better dependence on the complexity parameter $\mu_y(X)$ than the bounds \cite{munteanu2018coresets} for \texttt{l2s}, and so this agrees with the fact that the value of  $\mu_y(X)$  is high for \tsc{KDD Cup '99}. \cite{munteanu2018coresets} estimated $\mu_y(X)$ values of \tsc{Webb Spam}, \tsc{Covertype} and \tsc{KDD Cup ’99} to be $4.39, 1.86$ and $35.18$ respectively. For \tsc{Covertype} and \tsc{Webb Spam}, the performance of \texttt{lewis} is comparable or a little worse than that of \texttt{l2s}. Furthermore, on these two datasets, with regularization, uniform sampling does relatively well for very small sample sizes, which agrees with the results of \cite{CurtinImMoseley:2019}.

\begin{figure*}[h!]
	\begin{center}
		\begin{sc}
			\begin{tabular}{cccc}
				&{\small\hspace{.5cm}Webb Spam}&{\small\hspace{.5cm}Covertype}&{\small\hspace{.5cm}KDD Cup '99} \\
				\raisebox{1.7cm}[0pt][0pt]{\rotatebox[origin=c]{90}{\small Logistic}}&
				\includegraphics[width=0.28\linewidth]{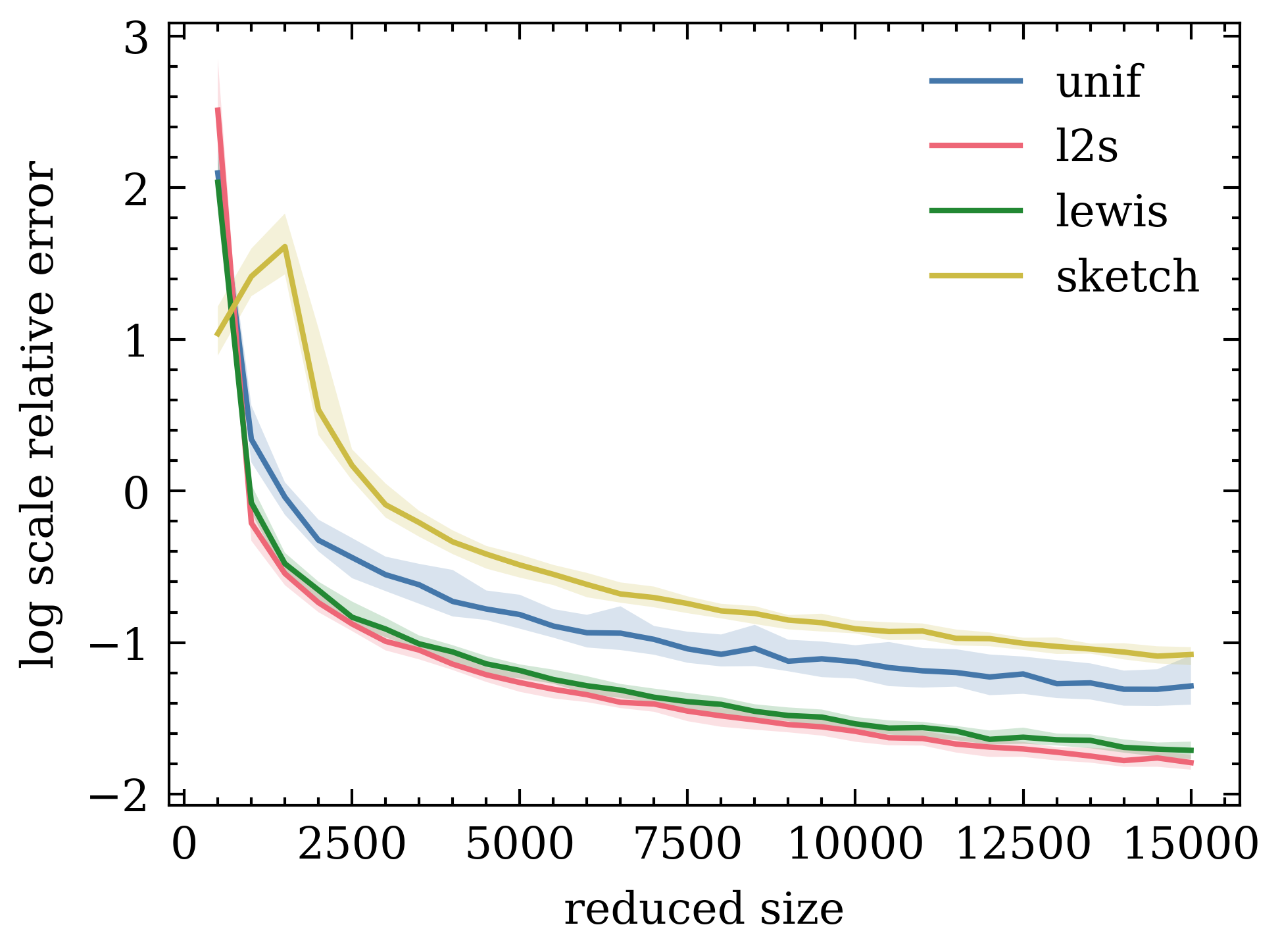}&
				\includegraphics[width=0.28\linewidth]{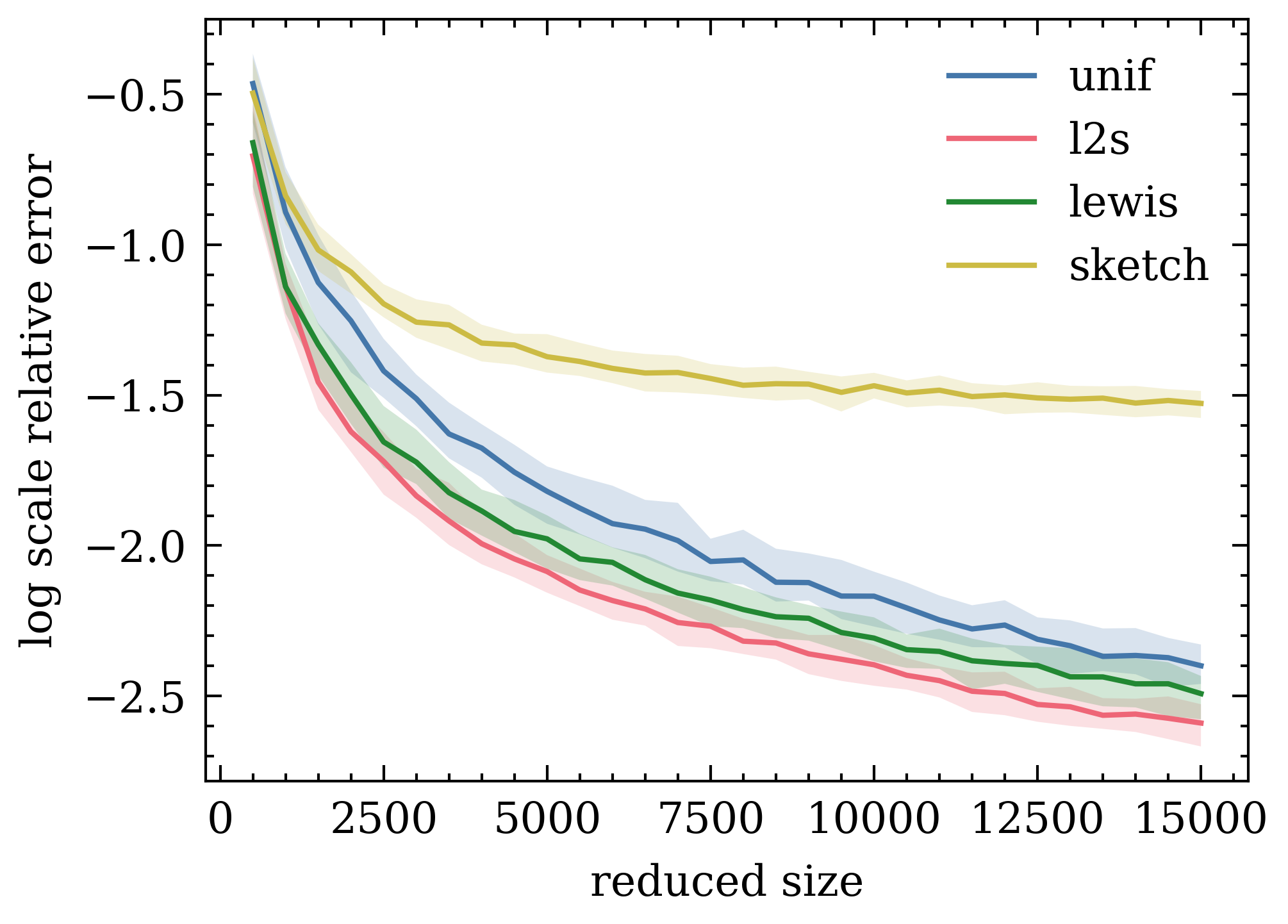}&
				\includegraphics[width=0.28\linewidth]{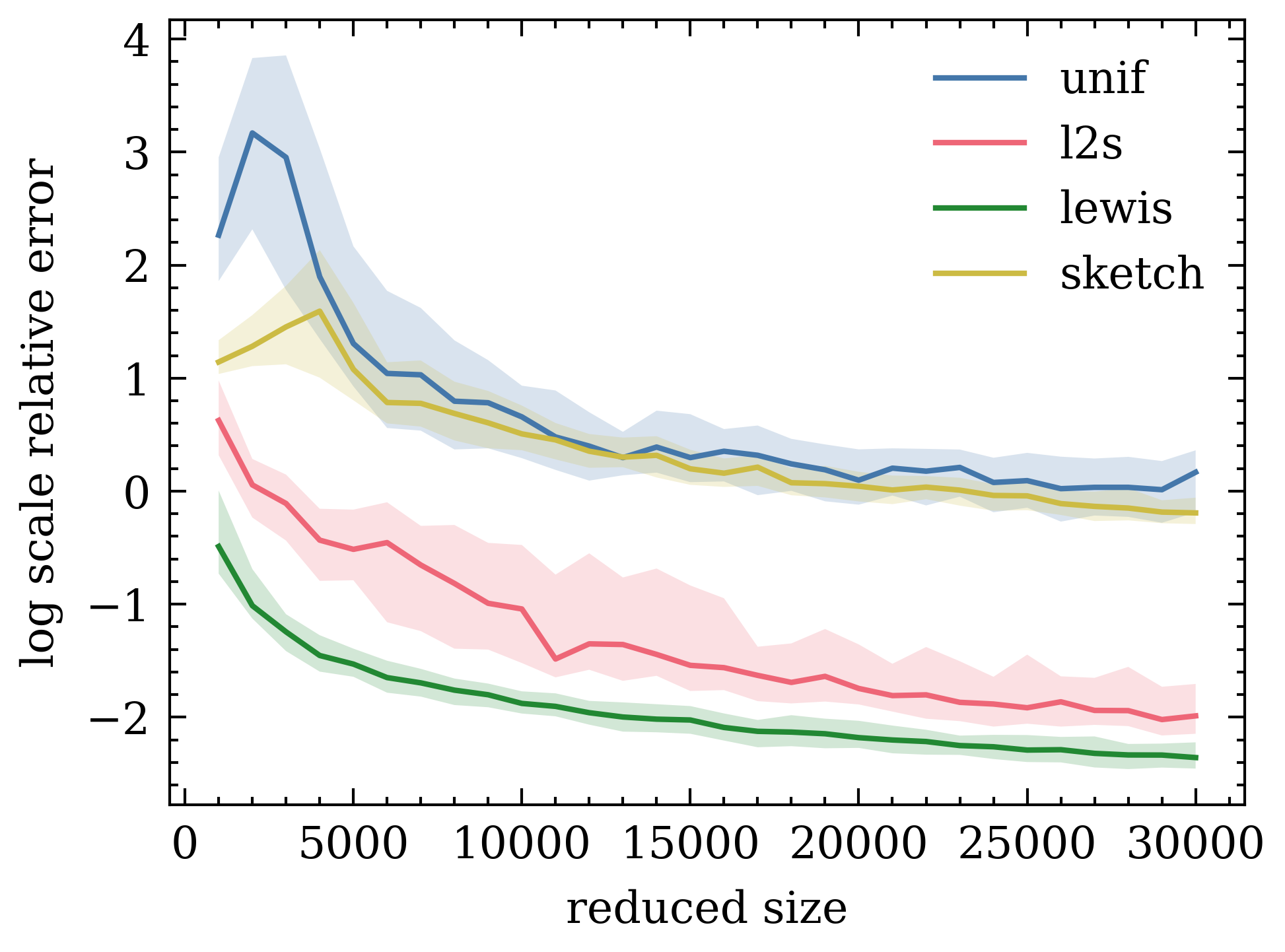} \\
				\raisebox{1.7cm}[0pt][0pt]{\rotatebox[origin=c]{90}{\small Hinge}}&
				\includegraphics[width=0.28\linewidth]{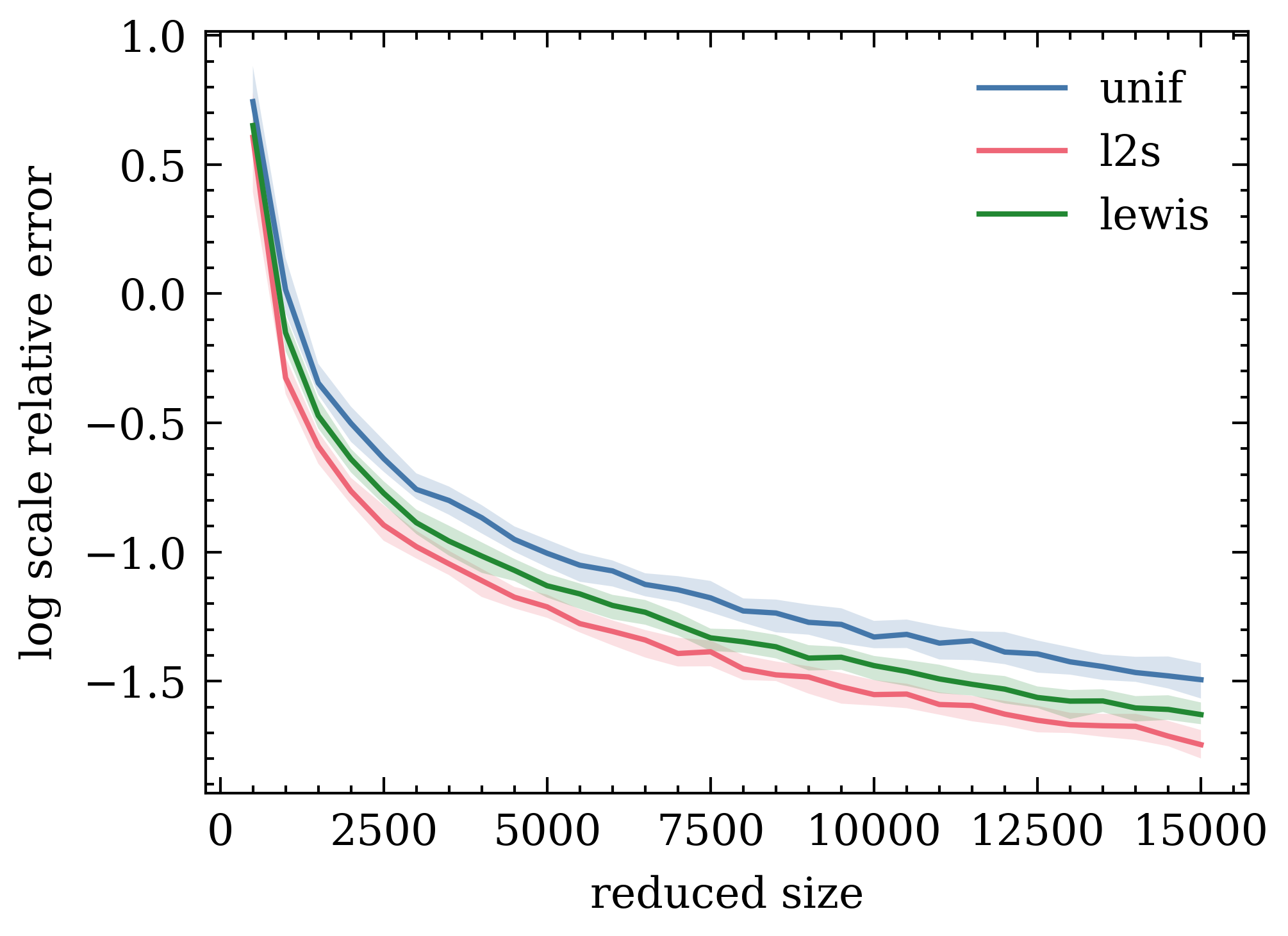}&
				\includegraphics[width=0.28\linewidth]{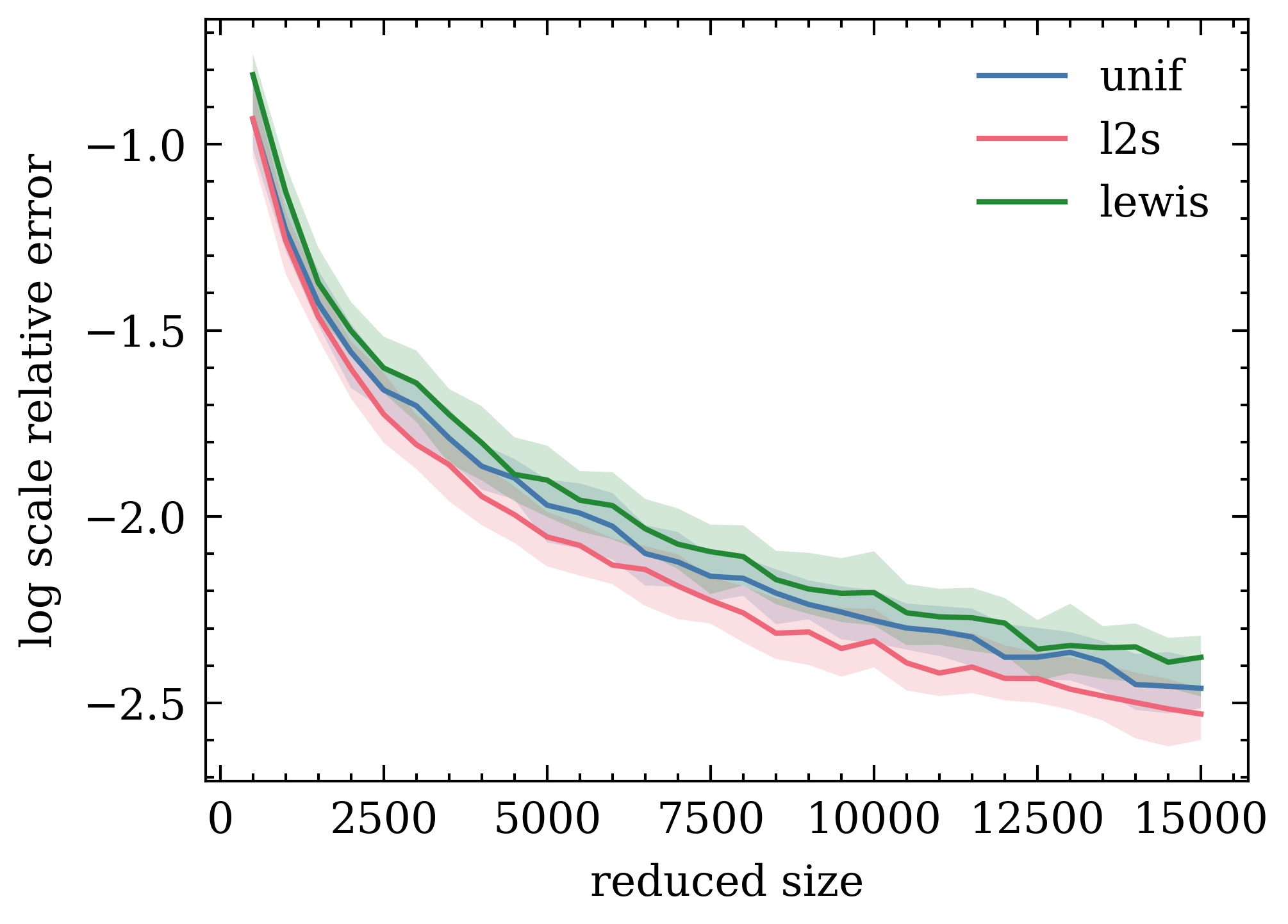}&
				\includegraphics[width=0.28\linewidth]{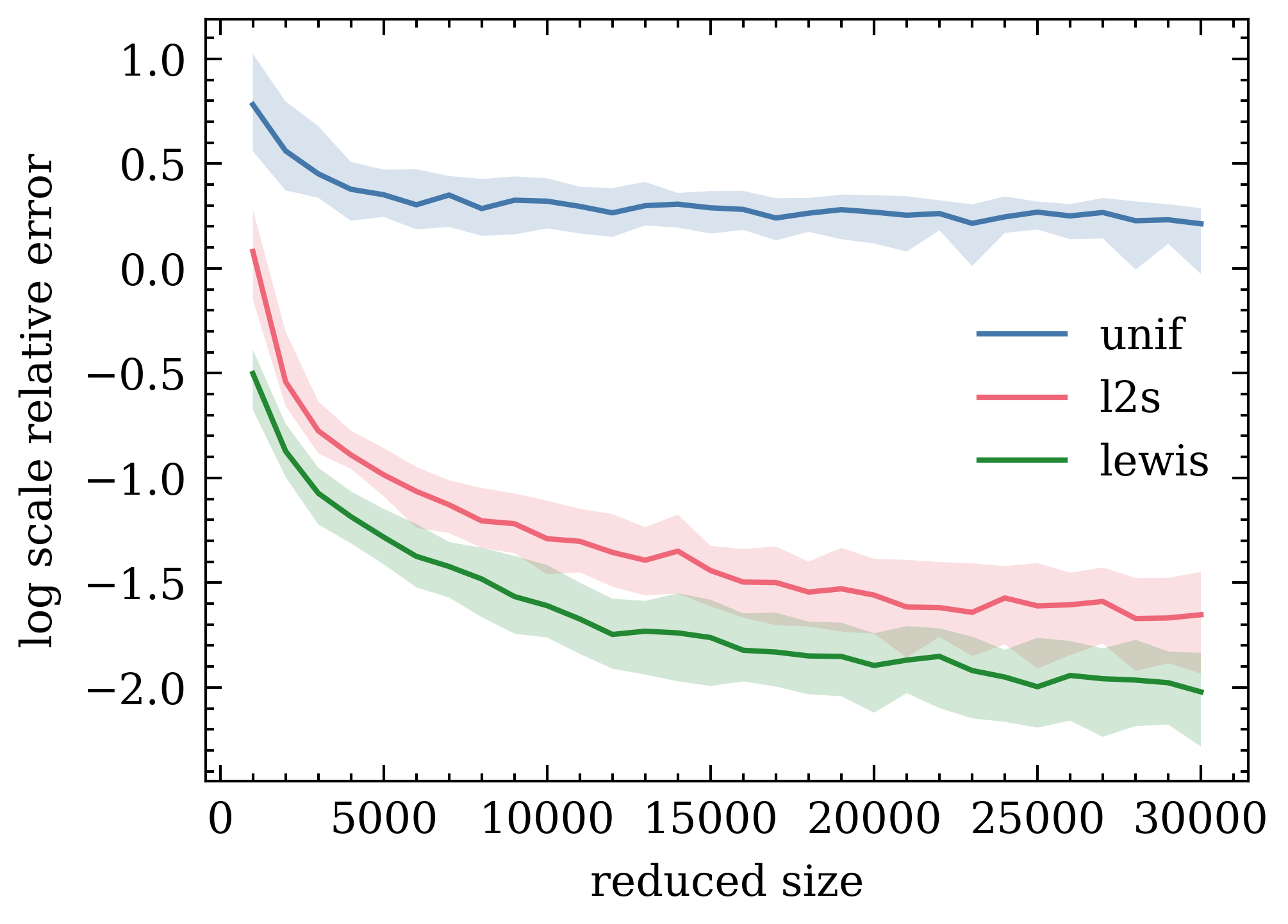} \\
				\raisebox{1.7cm}[0pt][0pt]{\rotatebox[origin=c]{90}{\parbox{2cm}{\small L2 Logistic}}}&
				\includegraphics[width=0.28\linewidth]{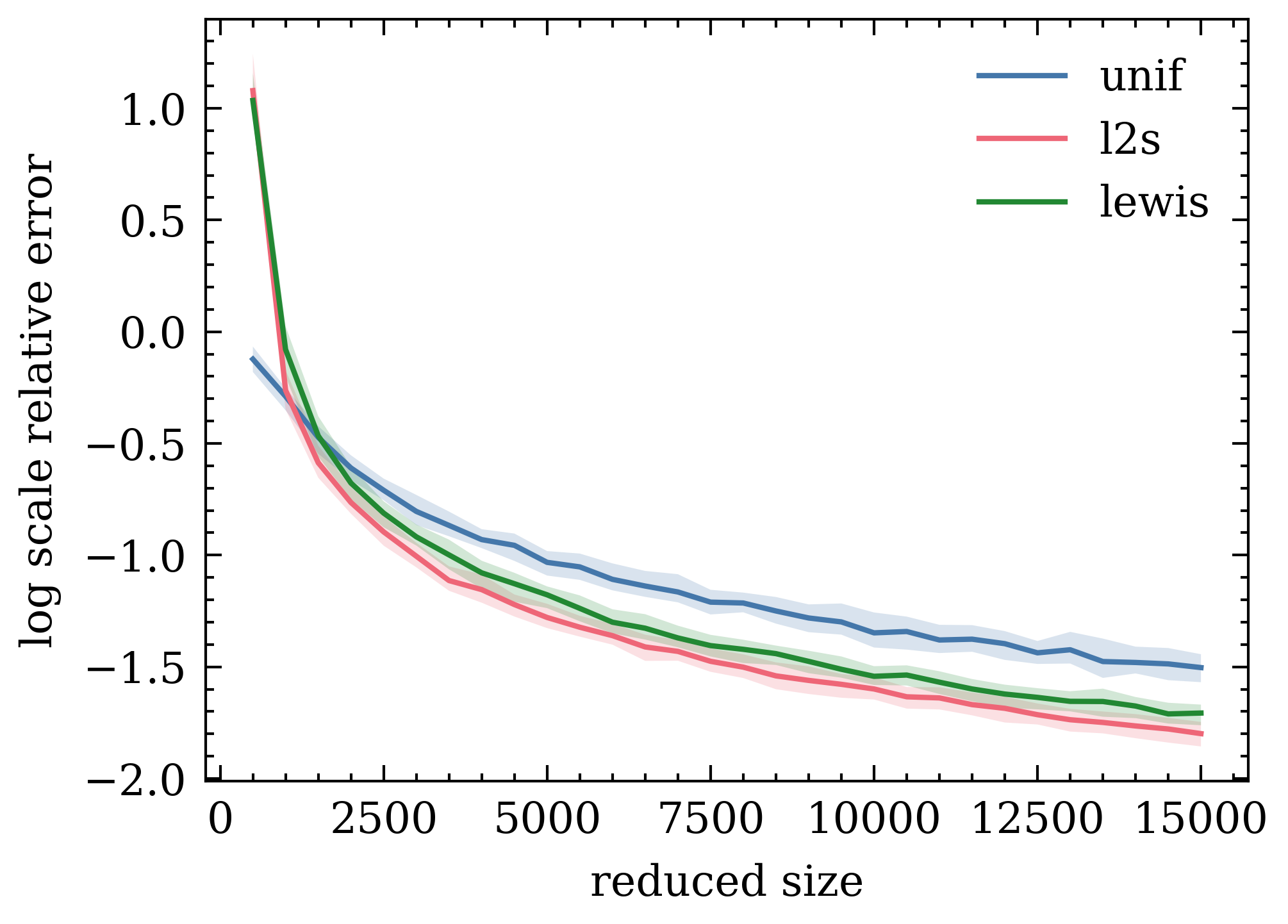}&
				\includegraphics[width=0.28\linewidth]{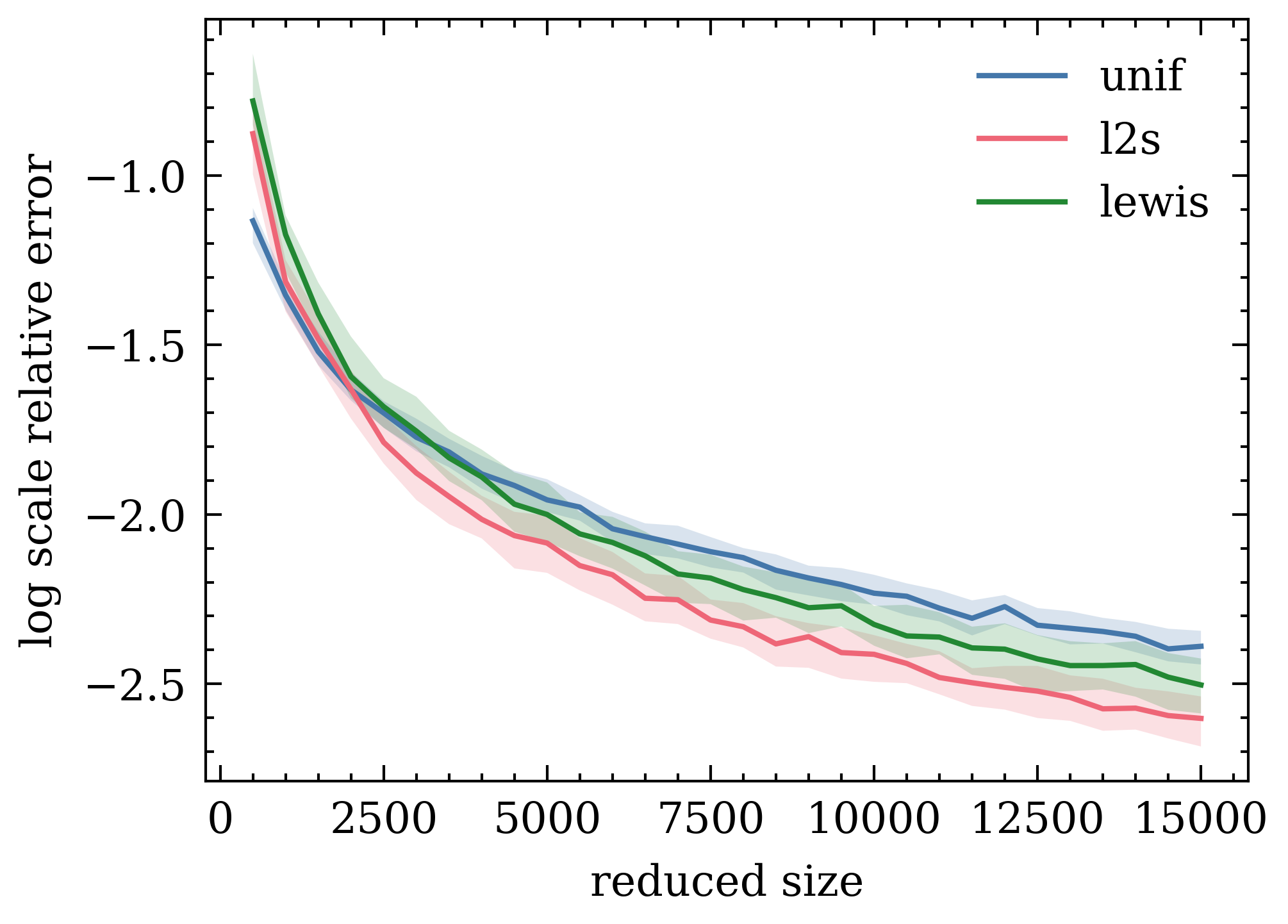}&
				\includegraphics[width=0.28\linewidth]{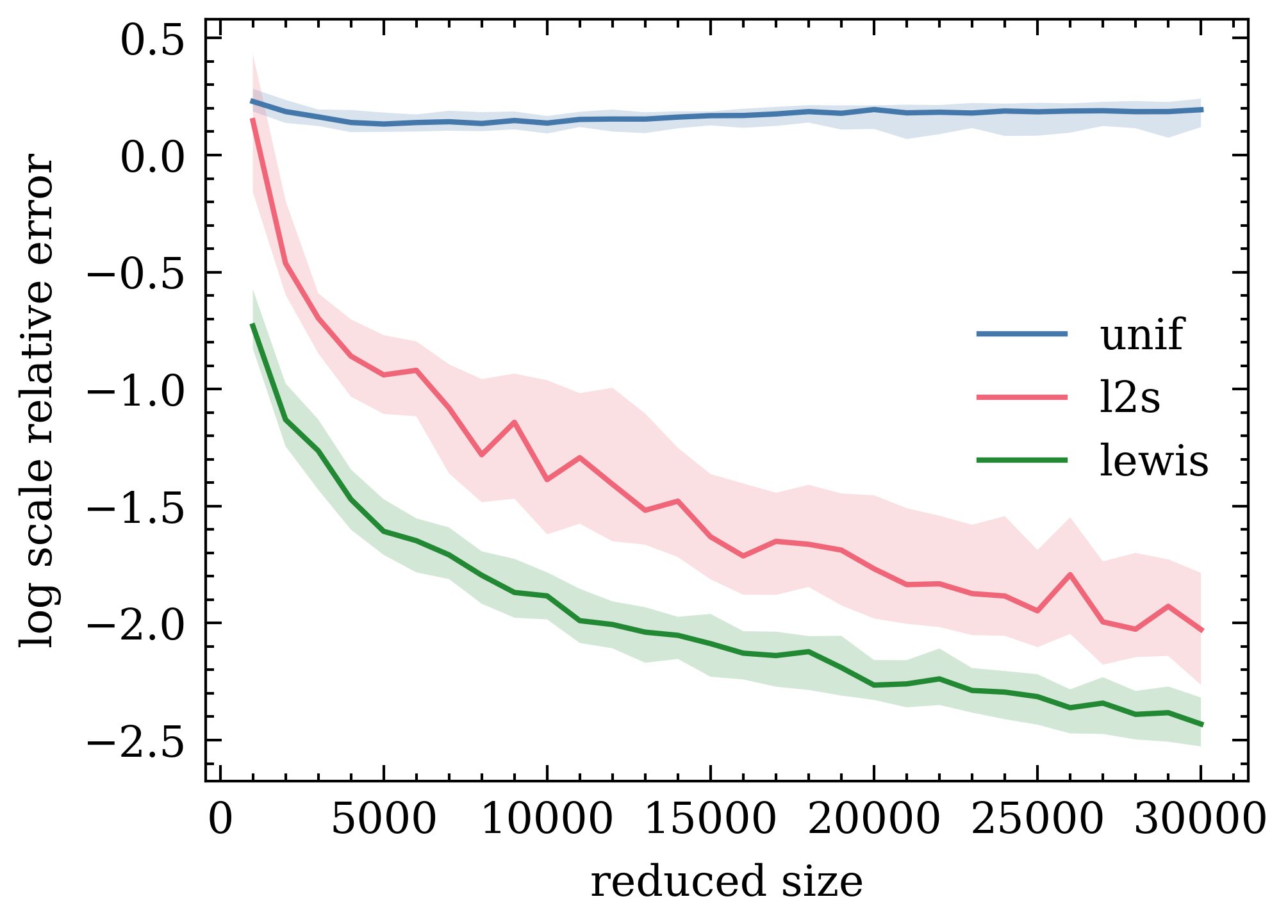} 
				\\
				\raisebox{1.9cm}[0pt][0pt]{\rotatebox[origin=c]{90}{\parbox{2cm}{\small L2 Hinge}}}&
				\includegraphics[width=0.28\linewidth]{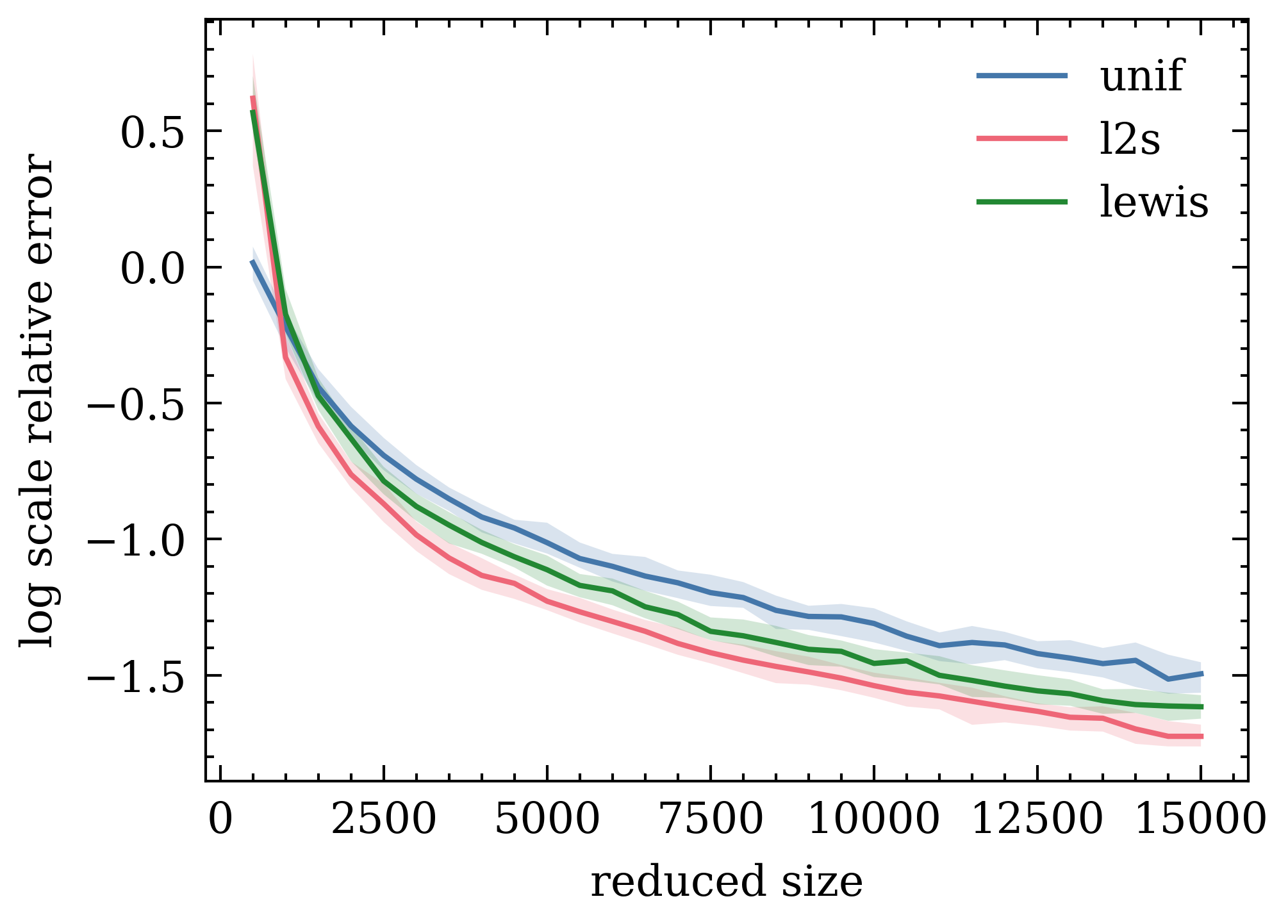}&
				\includegraphics[width=0.28\linewidth]{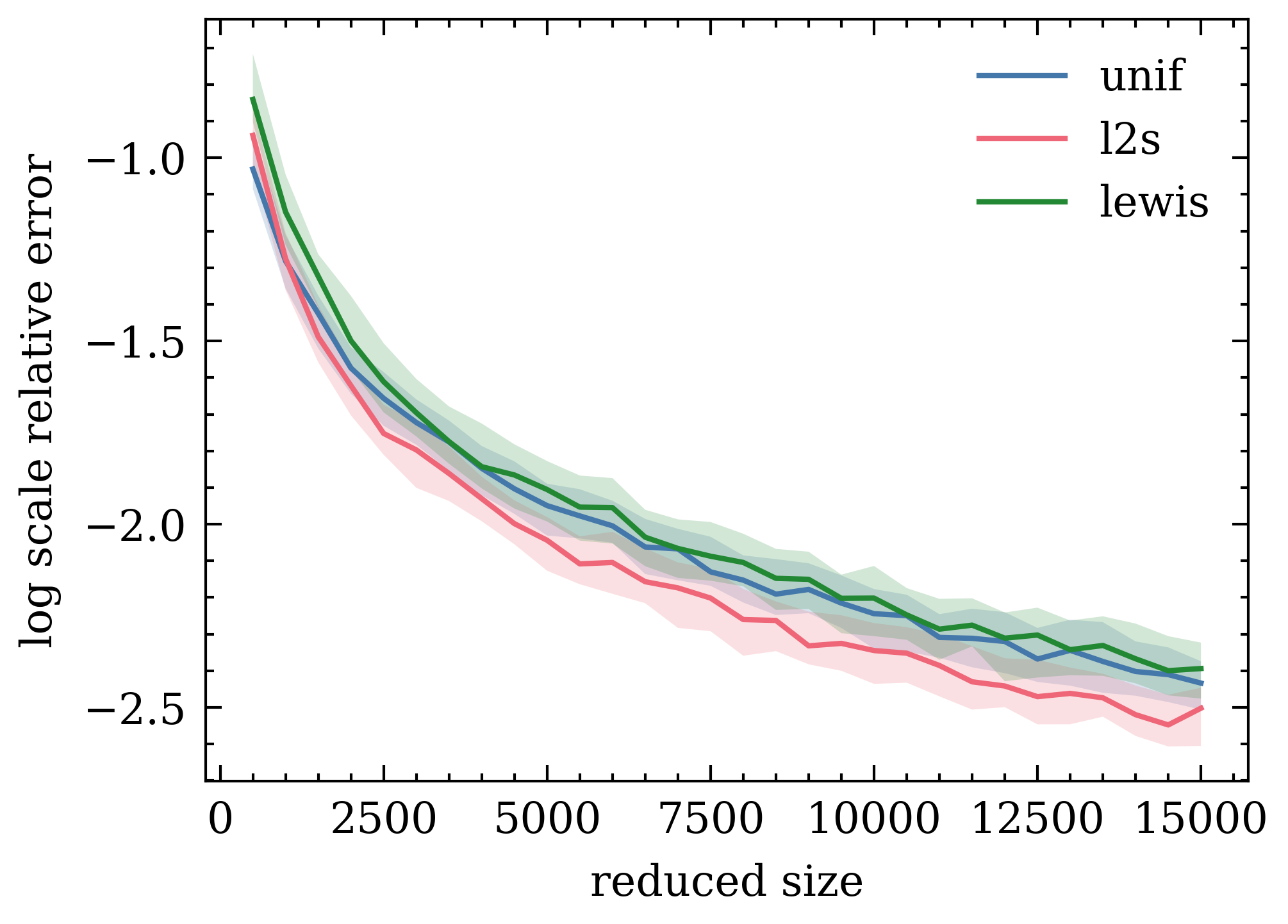}&
				\includegraphics[width=0.28\linewidth]{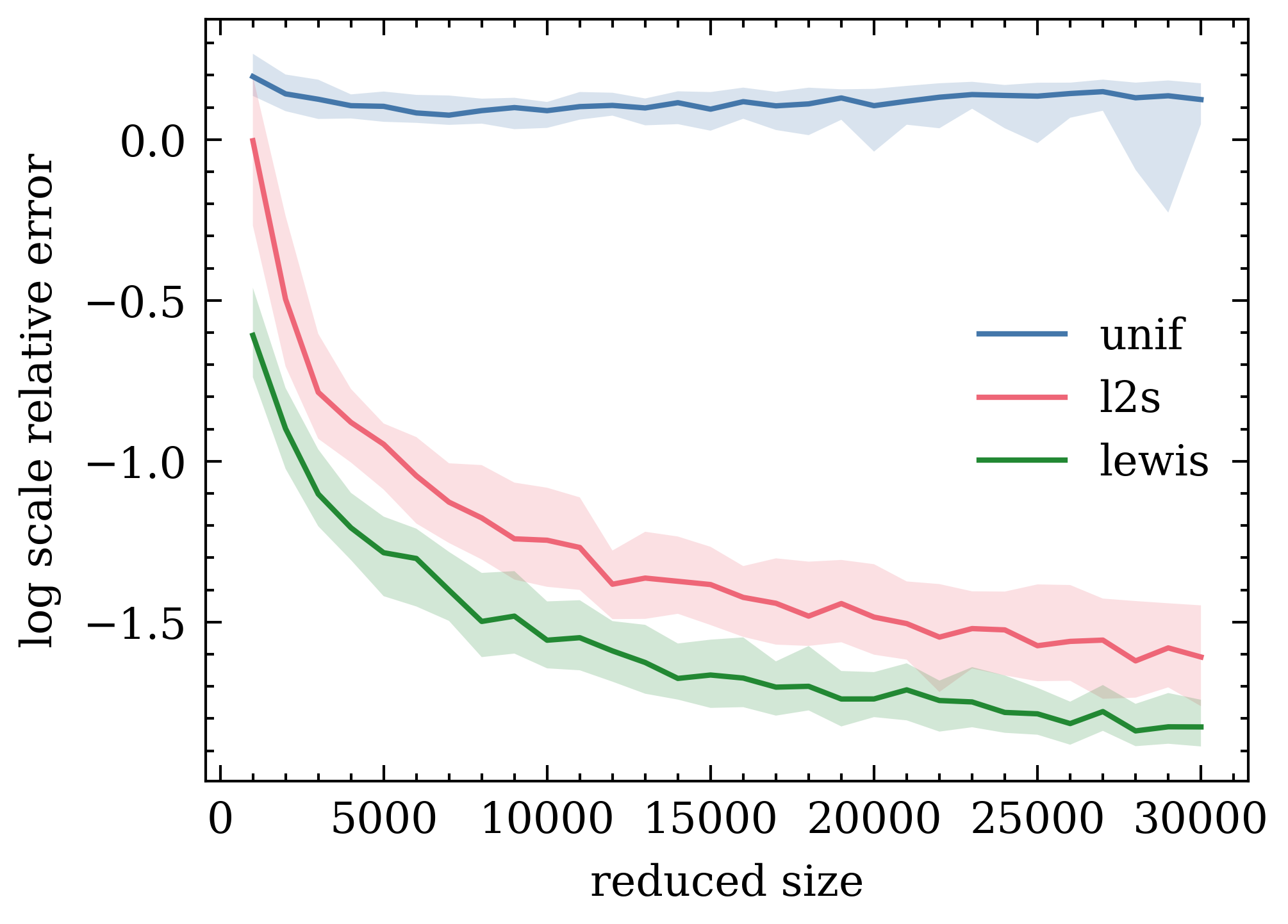} \\
			\end{tabular}
		\end{sc}
		\caption{Each plot represents the performance of the studied methods with respect to one combination of dataset and loss function. The $x$-axis shows the coreset size, and the $y$-axis shows the log scale of the relative error. Each experiment is run 100 times. The solid line represents the median error. The lower and upper boundary lines represent the 25th and 75th percentiles respectively. }
		\label{fig:experiments}
	\end{center}
\end{figure*}


\clearpage

\noindent {\bf Comparison of distributions.} To give a better intuition behind our results, we illustrate how different the Lewis weights are from the other sampling distributions on our three datasets in Fig. \ref{fig:distributions}. Given two distributions $\bar{p} = (p_1,p_2,..,p_n)$ and  $\bar{q}=(q_1,p_2,..,q_n)$, we plot the frequencies of $\{ \max (p_i/q_i, q_i/p_i)\}_i.$ 
We let $\bar{p}$ to be the uniform or \texttt{l2s} distributions and take $\bar{q}$ to be Lewis weights.  We observe that the Lewis weights are far from uniform on all datasets, especially \tsc{KDD Cup '99}. This may explain why \texttt{lewis} performs so well on this dataset. \texttt{l2s} and \texttt{lewis} are much closer in general, explaining their relatively similar performance. Note that these score comparisons are based only on the data matrix $X$, and not the label vector $y$, which does not affect the leverage scores or Lewis weights. Thus, they only give a partial picture of the differences between methods. In particular, our theoretical bounds and the bounds for \texttt{l2s} in \cite{munteanu2018coresets} both depend on $\mu_y(X)$, which depends on the label vector. 

\vspace{1em}
\begin{figure*}[ht!]
	\begin{center}
		\begin{sc}
			\begin{tabular}{cccc}
				&{\small\hspace{.5cm}Webb Spam}&{\small\hspace{.5cm}Covertype}&{\small\hspace{.5cm}KDD Cup '99} \\
				\raisebox{1.4cm}[0pt][0pt]{\rotatebox[origin=c]{90}{\small L2s}}&
				\includegraphics[width=0.30\linewidth]{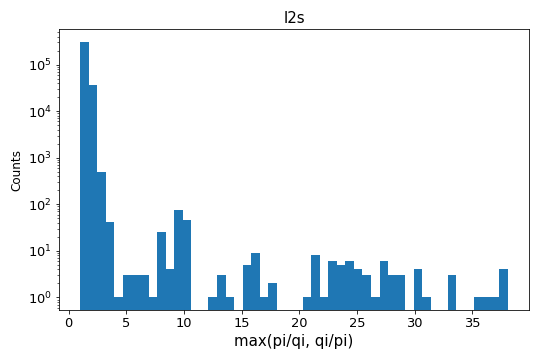}&
				\includegraphics[width=0.30\linewidth]{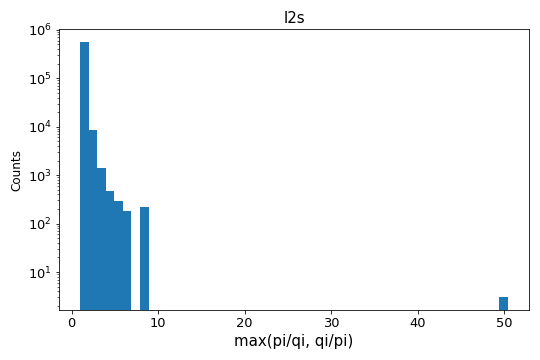}&
				\includegraphics[width=0.30\linewidth]{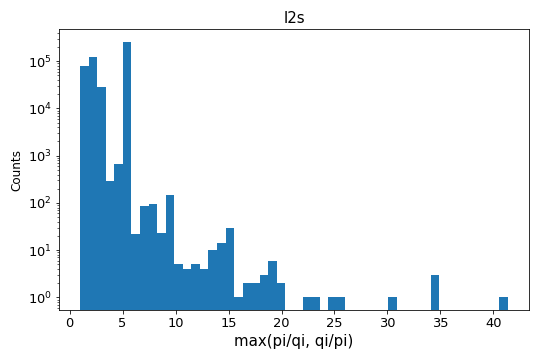} \\
				\raisebox{1.4cm}[0pt][0pt]{\rotatebox[origin=c]{90}{\small Uniform}}&
				\includegraphics[width=0.30\linewidth]{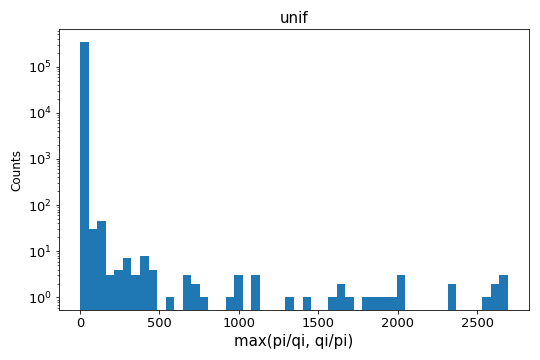}&
				\includegraphics[width=0.30\linewidth]{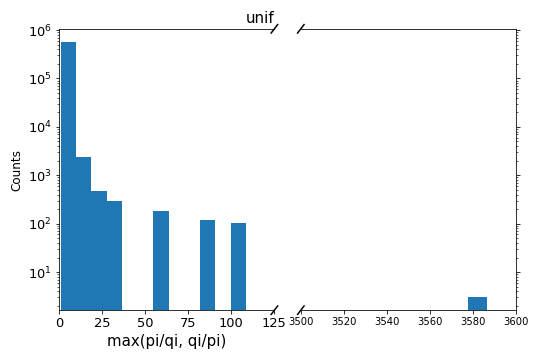}&
				\includegraphics[width=0.30\linewidth]{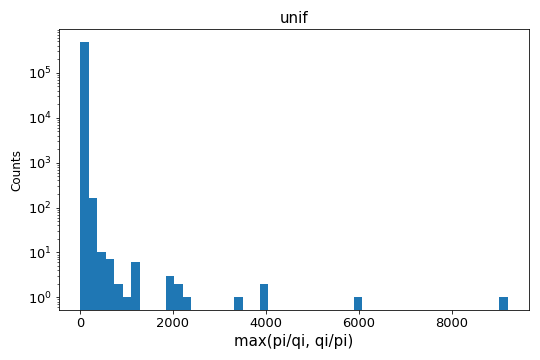} \\
			\end{tabular}
		\end{sc}
		\caption{Comparison of sampling distributions.}
		\label{fig:distributions}
	\end{center}
\end{figure*}

\section*{Acknowledgements}

We thank David Woodruff for generously providing the code from \cite{munteanu2018coresets,MunteanuOmlorWoodruff:2021}. Cameron Musco's work on this project was partially supported by an Adobe Research grant, along with NSF Grants 2046235 and 1763618.

\clearpage
\bibliographystyle{alpha}
\bibliography{ref}

\clearpage

\appendix

\section{Lower Bounds for Regularized Classification}\label{sec:lower}

We now give a lower bound showing that the results of \cite{CurtinImMoseley:2019} on coresets for regularized logistic and hinge loss regression (i.e., soft margin SVM) are essentially tight. Our bound tightens a lower bound given in \cite{CurtinImMoseley:2019}. It shows that, in the natural setting where the regularization parameter is sublinear in the number of data points $n$, the coreset size must depend polynomially on $n$. This contrasts the setting where we assume that $\mu(X)$ from Def. \ref{def:complexity} is bounded. In this case, as shown in Corollary \ref{cor:nicehinge}, relative error coresets with size scaling just logarithmically in $n$ are achievable.

\begin{theorem}[Regularized Classification -- Relative Error Lower Bound]\label{thm:hingeLB} Let $X \in \R^{n \times d}$ have all row norms bounded by $1$.
Let $f$ be the hinge loss $f(z) = \max(0,1+z)$ or log loss $f(z) = \ln(1+e^z)$ and for any $\kappa \in (0,1)$ consider the regularized loss $L: \R^d \rightarrow \R^+$,
$$L(\beta) = \sum_{i=1}^n f(X\beta)_i + n^{\kappa} \cdot R(\beta),$$
where $\kappa \in (0,1)$. There is no $O(1)$ relative error coreset for $L(\beta)$ with $o \left (\frac{n^{1-\kappa}}{\log^c n}\right )$ points where $c = 4$ for $R(\beta) = \norm{\beta}_2^2$, $c = 5/2$ for $R(\beta) = \norm{\beta}_2$, and $c = 3$ for  $R(\beta) = \norm{\beta}_1$.
\end{theorem}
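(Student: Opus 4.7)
The plan is to reduce from the one-way communication complexity of INDEX: Alice holds $x \in \{0,1\}^N$, Bob holds an index $i \in [N]$, Alice sends a single message, and Bob must output $x_i$; this classically requires $\Omega(N)$ bits. Taking $N = \Theta(n^{1-\kappa}/\log^{c-1} n)$, the goal is to show that any relative-error coreset can be used by Bob to recover $x_i$ and can be transmitted by Alice in $O(s \cdot d \cdot \log n)$ bits at polynomial precision, so that $s \cdot d \cdot \log n \ge \Omega(N)$ forces $s = \Omega(n^{1-\kappa}/\log^c n)$ once the encoding lives in ambient dimension $d = O(\log^{c-1} n)$.

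\emph{Encoding.} Using public randomness shared with Bob, Alice picks unit vectors $w_1,\ldots,w_N \in \R^{d-1}$ whose pairwise inner products are at most a small constant in absolute value: dense random sign vectors work for the $\ell_2$-type regularizers, while sparse codewords of an appropriate constant-weight code are needed for the $\ell_1$ regularizer. Alice then plants $M = n/N$ copies of the row $v_j = \tfrac{1}{\sqrt{a^2+b^2}}\bigl(a e_0 + b(2x_j-1) w_j\bigr)$ for every $j \in [N]$, with label $y_j = +1$, where $a,b \in (0,1)$ are fixed constants and $e_0$ is a ``background'' coordinate orthogonal to the $w_j$'s. Each bit of $x$ thus determines $M$ identical rows of $X$, and $\|v_j\|_2 = 1$ by construction.

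\emph{Bob's query.} Bob forms the probe $\beta_{\mathrm{q}} = -T e_0 + S w_i$ with small constants $T, S$ tuned to the regularizer $R$. By the near-orthogonality of the $w_j$'s, for every $j \ne i$ we have $y_j \langle v_j, \beta_{\mathrm{q}}\rangle$ concentrated around $-aT/\sqrt{a^2+b^2}$, which Bob chooses to be at most $-1-\Omega(1)$, so that $f\bigl(y_j \langle v_j, \beta_{\mathrm{q}}\rangle\bigr) = 0$ under the paper's hinge convention (and negligibly small under the log loss after boosting $T$ by a $\Theta(\log n)$ factor). The $i$-th block contributes $M \cdot f\bigl((-aT + b(2 x_i - 1)S)/\sqrt{a^2+b^2}\bigr)$, whose value flips between $0$ and $\Theta(M) = \Theta(n^\kappa)$ as $x_i$ flips. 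Tuning $T, S$ so that $n^\kappa R(\beta_{\mathrm{q}}) = \Theta(n^\kappa)$ makes $L(\beta_{\mathrm{q}})$ itself differ by a constant factor between the cases $x_i = 0$ and $x_i = 1$, so any coreset with sufficiently small constant relative error lets Bob read off $x_i$ from the approximate value of the loss.

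\emph{Counting and main obstacle.} A coreset of $s$ weighted points in $\R^d$ can be encoded in $O(s \cdot d \cdot \log n)$ bits at polynomial precision, so the INDEX lower bound forces $s \cdot d \cdot \log n = \Omega(n^{1-\kappa}/\log^{c-1} n)$; substituting the construction's $d = O(\log^{c-1} n)$ yields $s = \Omega(n^{1-\kappa}/\log^c n)$, as claimed. The delicate step is the joint design of the $w_j$'s together with the constants $a, b, T, S$ and the block multiplicity $M$, so that simultaneously (i) the near-orthogonality noise stays flat under $f$, (ii) the regularizer contribution is only a constant multiple of the per-block signal $\Theta(n^\kappa)$, and (iii) the dimension stays $O(\log^{c-1} n)$. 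The $\|\beta\|_1$ case is the hardest because dense queries are heavily penalized, forcing the $w_j$'s to be sparse codewords and slightly inflating $M$; the $\|\beta\|_2^2$ case is the most aggressive on $\|\beta_{\mathrm{q}}\|_2$ and so demands the largest dimension budget; the $\|\beta\|_2$ case lies in between. Balancing these three constraints against one another is what produces the exponents $c = 3, 4, 5/2$ in the statement.
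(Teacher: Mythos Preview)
Your high-level plan---reduce from one-way INDEX, have Alice encode her bits as (replicated) rows of $X$, have Bob probe with a vector that lights up only the target block, and argue that a small coreset is a short message---is exactly what the paper does. The difference is in how the log factors are accounted for, and here your proposal has real gaps.

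First, the bit count. You write that a coreset of $s$ weighted points needs $O(s\cdot d\cdot \log n)$ bits. But in this paper a coreset is a weighted \emph{subset of the input rows}, so Alice only needs to send $s$ indices into $[n]$ plus $s$ weights, i.e.\ $O(s\log n)$ bits after rounding. Your own final arithmetic (``$s\cdot d\cdot\log n \ge \Omega(N)$ with $d=O(\log^{c-1}n)$ and $N=\Theta(n^{1-\kappa}/\log^{c-1}n)$ gives $s=\Omega(n^{1-\kappa}/\log^c n)$'') does not actually work with the $d$ factor present: plugging in yields $s=\Omega(n^{1-\kappa}/\log^{2c-1}n)$, not $\log^c n$. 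The arithmetic only closes if you drop the $d$ and use the $O(s\log n)$ encoding.

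Second, and more importantly, the source of the exponents $c=4,\,5/2,\,3$ is not the ``dimension budget'' you describe. In the paper the ambient dimension is always $d+1$ with $d=\log_2 n^{1-\kappa}=\Theta(\log n)$, the \emph{same} for every regularizer. What changes is the norm of Bob's query: because the rows carry a large ``bias'' coordinate equal to $d$ and are then rescaled by $\gamma=1/\sqrt{d^2+d}$ to have unit norm, Bob's $\beta$ must be scaled \emph{up} by $1/\gamma=\Theta(d)$ to keep the inner products at the right integer scale. This forces $\|\beta\|_2^2=\Theta(d^3)$, $\|\beta\|_2=\Theta(d^{3/2})$, $\|\beta\|_1=\Theta(d^2)$; the number of distinct rows is then $n_0=\Theta(n^{1-\kappa}/R(\beta))$, and one extra $\log n$ from the index/weight encoding gives $c=4,\,5/2,\,3$. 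Your narrative---``$\|\beta\|_2^2$ is most aggressive on $\|\beta_q\|_2$ so demands the largest dimension''---does not match this mechanism and is not internally consistent either.

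In fact your random near-orthogonal construction, taken at face value with $T,S=O(1)$, has $R(\beta_q)=O(1)$ for the $\ell_2$ regularizers. If you carry that through honestly you get $N=\Theta(n^{1-\kappa})$ and hence $s=\Omega(n^{1-\kappa}/\log n)$, which is \emph{stronger} than the stated theorem; you would not recover the specific $c$ values at all. So either you should commit to that and state the sharper bound, or---to match the paper exactly---replace the random $w_j$'s with the deterministic binary-code construction (row $i$ stores $\mathrm{bin}(i)$ in $\pm 1$ coordinates plus a bias $d$, Bob's $\beta$ is $\mathrm{bin}(b)$ with a $-1$ bias, both appropriately scaled), and trace the log factors through $R(\beta)$ as above.
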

Note that since this is a lower bound, the assumption that $X$ has bounded row norms only makes it stronger. This assumption is common in prior work.
\begin{proof}
We focus on the case when $f$ is the hinge loss for simplicity. An identical argument applies when $f$ is the log loss, with some adjustments of the constants. We also focus on the case when $R(\beta) = \norm{\beta}_2^2$. Again, essentially an identical argument proves the claim when $R(\beta) = \norm{\beta}_2$ or $R(\beta) = \norm{\beta}_1$.
    We prove the lower bound via a reduction from the INDEX problem in communication complexity. Alice has a string $a \in \{0,1\}^n$ and Bob has an index $b \in \{1,\ldots,n\}$, and they wish to compute the bit $a(b)$. It is well known that the randomized $1$-way communication complexity of this problem is $\Omega(n)$ \cite{Roughgarden:2015}. We will show that the existence of a relative error coreset for $L(x)$ with $o \left (\frac{n^{1-\kappa}}{\log^4 n}\right )$ points would contradict this lower bound, giving the result.
    
Assume without loss of generality that $n^{1-\kappa}$ is a power of two. Let $d = \log_2 n^{1-\kappa}$.
Our reduction is to the INDEX problem with input size $n_0 = \frac{n^{1-\kappa}}{d(d+1)^2} = \Theta\left (\frac{n^{1-\kappa}}{\log^3n} \right )$.  Let Alice construct the matrix $X_0 \in \R^{n_0 \times (d+1)}$ which has the first $d$ entries of row $i$ equal to the binary representation of $i$ if $a(i) = 1$ and equal to $0$ otherwise. In the binary representation, have $0$ represented by $-1$ and $1$ represented by $1$. Let every row have $d$ in the last column. Finally, scale the matrix by a $\gamma = 1/\sqrt{d^2 +d}$ factor so each row has Euclidean norm exactly $1$. Let $X \in \R^{n \times (d+1)}$ be equal to $n^{\kappa} \cdot d(d+1)^2$ copies of $X_0$ stacked on top of each other (assume without loss of generality that $n^{\kappa} \cdot d(d+1)^2$ is an integer).

Bob will let $\beta \in \R^{d +1}$ be the binary representation for $b$ (again written using $-1$s and $1$s) with a $-1$ in the last entry. He will scale $\beta$ by a $1/\gamma$ factor so $\norm{\beta}_2^2 = (d+1) \cdot (d^2+d) = d (d+1)^2$.
%
If $a(b) = 1$ we have:
\begin{align}\label{eq:case1}
L(\beta) &= n^{\kappa} \cdot d(d+1)^2 \cdot \left (\sum_{j \ne b} h(X\beta)_j + h(X\beta)_b \right ) + n^{\kappa} \norm{\beta}_2^2 \nonumber\\
&=n^{\kappa} \cdot d(d+1)^2+ n^{\kappa} \cdot d(d+1)^2 = 2 n^{\kappa} \cdot d(d+1)^2,
\end{align}
where the second line holds since for $j \neq b$, $[X\beta]_j \le d-1 - d \le -1$ and so $h(X\beta)_j = 0$. $[X\beta]_b = d - d = 0$ and so $h(X\beta)_b = 1$.
Otherwise, by the same logic, if $a(b) = 0$ we have:
\begin{align}\label{eq:case2}
L(\beta) &= n^{\kappa} \cdot d(d+1)^2  \cdot \left (\sum_{j \ne b} h(X\beta)_j + h(X\beta)_b \right ) + n^{\kappa} \norm{\beta}_2^2 = n^{\kappa} \cdot d(d+1)^2.
\end{align}
From \eqref{eq:case1} and \eqref{eq:case2}, we can see that a coreset with relative error $\epsilon = 1/2$ can distinguish the two cases of $a(b) =1$ and $a(b) = 0$. Assume that there is such a relative error coreset consisting of $m$ rows of $X$, along with $m$ corresponding weights $w_1,\ldots,w_m$. 
We can assume that all $w_j \le n^{c_1}$ for some large constant $c_1$. If $a(i_j) = 1$ any $w_j$ larger than this would lead to the coreset cost being a large over estimate when $b = i_j$. If $a(i_j) = 0$, then scaling the $i_j^{th}$ row by any $w_j$ will have no effect since for all $\beta$ that Bob may generate, $h(X\beta)_{i_j} = 0$. So again, we can assume $w_j \le n^{c_1}$.

Additionally, if we round each $w_j$ to the nearest integer multiple of $1/n^{c_1}$ we will not change the coreset cost by more than a $n/n^{c_1}$ factor in all our input cases, since we always have $h(X\beta)_i \in [0,1]$. Thus, Alice can represent each rounded $w_j$ using $\log n$ bits and send the full coreset and weights to Bob using $O(m \cdot (\log n + d)) = O(m \log n)$ bits of communication. Since Bob can then use this coreset to solve the INDEX with input size $n^0 =  \Theta\left (\frac{n^{1-\kappa}}{\log^3n} \right )$, we must have $m = \Omega \left (\frac{n^{1-\kappa}}{\log^4 n} \right )$, proving the theorem.

In the case that $R(\beta) = \norm{\beta}_2$ we have $\norm{\beta}_2 = d^{1/2} (d+1) = \Theta(d^{3/2})$ and so can set $n_0 = \Theta\left (\frac{n^{1-\kappa}}{\log^{3/2}n} \right )$ instead of $n_0 = \Theta\left (\frac{n^{1-\kappa}}{\log^{3}n} \right )$, which gives the final lower bound of $\Omega \left ( \frac{n}{\log^{5/2} n} \right )$. Similarly, for $R(\beta) = \norm{\beta}_1$, we have $\norm{\beta}_1 = d^{1/2} (d+1)^{3/2} = \Theta(d^{2})$, yielding a final bound of $\Omega \left ( \frac{n}{\log^{3} n} \right )$.
\end{proof}

\end{document}